\newcommand\blfootnote[1]{%
	\begingroup
	\renewcommand\thefootnote{}\begin{NoHyper}\footnote{#1}%
		\addtocounter{footnote}{-1}\end{NoHyper}%
	\endgroup
}
\newcounter{row}
\newcounter{col}
\renewcommand{\hat}{\widehat}
\def\singlespace{\def\baselinestretch{1}\@normalsize}
\def\wh{\widehat}
\def\wt{\widetilde}
\newcommand{\diag}{{\rm diag}}
\newcommand{\tr}{{\rm tr}}
\newcommand{\bA}{{\mathbf A}}
\newcommand{\bB}{{\mathbf B}}
\newcommand{\bH}{{\mathbf H}}
\newcommand{\bU}{{\mathbf U}}
\newcommand{\bV}{{\mathbf V}}
\newcommand{\bX}{{\mathbf X}}
\newcommand{\bY}{{\mathbf Y}}
\newcommand{\be}{{\mathbf e}}
\newcommand{\bu}{{\mathbf u}}
\newcommand{\bv}{{\mathbf v}}
\newcommand{\bx}{{\mathbf x}}
\newcommand{\bTheta} {\boldsymbol{\Theta}}
\def\6bullets{\bullet\bullet\bullet\bullet\bullet\bullet}
\newcommand{\mbE}{\mathbb{E}}
\newcommand{\mbR}{\mathbb{R}}
\newcommand{\mbP}{\mathbb{P}}
\newcommand{\pr}{\textsf{P}}
\newcommand{\MC}{MC}
\newcommand{\norm}[1]{\Vert#1\Vert}
\newcommand{\Norm}[1]{\left\Vert#1\right\Vert}
\newcommand{\inner}[2]{\langle #1, #2 \rangle}
\newcommand{\Inner}[2]{\left\langle #1, #2 \right\rangle}
\newcommand{\abs}[1]{\vert#1\vert}
\newcommand{\Abs}[1]{\left\vert#1\right\vert}
\newcommand{\Ind}[1]{\mathds{I}\left[#1\right]}
\newtheorem{lem}{Lemma}
\newtheorem{thm}{Theorem}
\newtheorem{prop}{Proposition}
\numberwithin{equation}{section}
\title{Median Matrix Completion: from Embarrassment to Optimality}
\author[1]{Weidong Liu}
\author[2]{Xiaojun Mao}
\author[3]{Raymond K. W. Wong}
\affil[1]{School of Mathematical Sciences and MoE Key Lab of Artificial Intelligence Shanghai Jiao Tong University, Shanghai, 200240, China}
\affil[2]{School of Data Science, Fudan University, Shanghai, 200433, China}
\affil[3]{Department of Statistics, Texas A\&M University, College Station, TX 77843, U.S.A.}
\date{}
\begin{document}
\maketitle
\begin{abstract}
    In this paper, we consider matrix completion with absolute deviation loss
    and obtain an estimator of the median matrix.
    Despite several appealing properties of median,
    the non-smooth absolute deviation loss
    leads to computational challenge for
    large-scale data sets which are increasingly common among matrix completion
    problems. A simple solution to large-scale problems is parallel computing.
    However, embarrassingly parallel fashion often leads to inefficient
    estimators. Based on the idea of pseudo data,
    we propose a novel refinement step, which turns such inefficient
    estimators into a rate (near-)optimal matrix completion procedure.
    The refined estimator is an
    approximation of a regularized least median estimator,
    and therefore not an ordinary regularized empirical risk estimator.
    This leads to a non-standard analysis of asymptotic behaviors.
    Empirical results are also provided to confirm the effectiveness of the proposed method.
\end{abstract}
\blfootnote{E-mail addresses: weidongl@sjtu.edu.cn, maoxj@fudan.edu.cn, raywong@tamu.edu.}
\section{Introduction}

Matrix completion (\MC{}) has recently gained a substantial amount of popularity among
researchers and practitioners due to its wide applications;
as well as various related theoretical advances \citet{Candes-Recht09,Candes-Plan10,Koltchinskii-Lounici-Tsybakov11,Klopp14}.
Perhaps the most well-known example of a \MC{} problem
is the Netflix prize problem \citep{Bennett-Lanning07}, of which the goal is to predict missing entries of a partially observed matrix of movie ratings.
Two commonly shared challenges among \MC{} problems are high dimensionality of the matrix and a huge proportion of missing entries.
For instance, Netflix data has less than 1\% of observed entries of a matrix with around $5\times 10^5$ rows and $2\times 10^4$ customers.
With technological advances in data collection, we are confronting
increasingly large matrices nowadays.

Without any structural assumption on the target matrix, it is well-known that
\MC{} is an ill-posed problem.
A popular and often well-justified assumption is low rankness,
which however leads to a challenging and non-convex rank minimization problem
\cite{Srebro-Rennie-Jaakkola05}.
The seminal works of \citet{Candes-Recht09,Candes-Tao10,Gross11} showed that, when the entries are observed without noise, a perfect recovery of a low-rank matrix
can be achieved by a convex optimization via near minimal order of sample size, with high probability.
As for the noisy setting, some earlier work \citep{Candes-Plan10,Keshavan-Montanari-Oh10,Chen-Chi-Fan19} focused on arbitrary, not necessarily random, noise.
In general, the arbitrariness may prevent asymptotic recovery even in a probability sense.

Recently, a significant number of works
\citep[e.g.][]{Bach08,Koltchinskii-Lounici-Tsybakov11,Negahban-Wainwright11,Rohde-Tsybakov11, Negahban-Wainwright12,Klopp14,Cai-Zhou16,Fan-Gong-Zhu19,Xia-Yuan19}
targeted at more amenable
random error models,
under which (near-)optimal estimators had been proposed.
Among these work, trace regression model 
is one of the most popular models due to its regression formulation.
Assume $N$ independent pairs $(\bX_k,Y_k)$, for $k=1,\dots,N$, are observed,
where $\bX_k$'s are random design matrices
of dimension $n_1\times n_2$
and $Y_k$'s are response variables in $\mbR$.
The trace regression model assumes
\begin{equation}
Y_k=\tr\left(\bX_k^{\rm T}\bA_{\star}\right)+\epsilon_{k},\qquad k=1,\dots,N,
\label{eqn:tracereg}
\end{equation}
where $\tr(\bA)$ denotes the trace of a matrix $\bA$, and
$\bA_{\star}\in\mbR^{n_1\times n_2}$ is an unknown target matrix. Moreover,
the elements of
$\bm{\epsilon}=(\epsilon_1,\dots,\epsilon_N)$ are $N$ i.i.d.~random noise
variables independent of the design matrices.
In \MC{} setup, the design matrices $\bm{X}_k$'s are assumed to lie in
the set of canonical bases
\begin{equation}
\mathcal{X}=\{\bm{e}_j(n_1)\bm{e}_k(n_2)^{\rm T} : j=1,\dots, n_1; k=1,\dots, n_2\},
\label{eqn:canonical-basis}
\end{equation}
where
$\bm{e}_j(n_1)$ is the $j$-th unit vector in $\mathbb{R}^{n_1}$,
and 
$\bm{e}_k(n_2)$ is the $k$-th unit vector in $\mathbb{R}^{n_2}$.
Most methods then apply a regularized empirical risk minimization (ERM) framework
with a quadratic loss.
It is well-known that the quadratic loss
is most suitable for light-tailed (sub-Gaussian) error,
and leads to non-robust estimations.
In the era of big data,
a thorough and accurate data cleaning step, as part of data preprocessing, becomes virtually impossible.
In this regard,
one could argue that robust estimations are more desirable,
due to their reliable performances even in the presence of outliers
and violations of model assumptions.
While robust statistics is a well-studied area with a rich history \citep{Davies93,huber2011robust},
many robust methods were developed for small data by today's standards,
and are deemed too computationally intensive for big data or complex models.
This work can be treated as part of the general effort
to broaden the applicability of robust methods
to modern data problems.

\subsection{Related Work}

Many existing robust \MC{} methods
adopt regularized ERM and
assume observations are obtained from a low-rank-plus-sparse structure
$\bm{A}_{\star}+\bm{S} + \bm{E}$,
where the low-rank matrix $\bm{A}_{\star}$
is the target uncontaminated component; the sparse matrix $\bm{S}$
models the gross corruptions (outliers) locating at a small proportion of
entries;
and $\bm{E}$ is an optional (dense) noise component.
As gross corruptions are already taken into account,
many methods with low-rank-plus-sparse structure
are based on quadratic loss.
\citet{Chandrasekaran-Sanghavi-Parrilo11,Candes-Li-Ma11,Chen-Jalali-Sanghavi13,Li13} considered the
noiseless setting (i.e., no $\bm{E}$) with an element-wisely sparse $\bm{S}$.
\citet{Chen-Xu-Caramanis11}
studied the noiseless model with column-wisely sparse $\bm{S}$.
Under the model with element-wisely sparse $\bm{S}$,
\citet{Wong-Lee17} looked into the setting of arbitrary (not necessarily random)
noise $\bm{E}$,
while \citet{Klopp-Lounici-Tsybakov17} and \citet{Chen-Fan-Ma20}
studied random (sub-Gaussian) noise model for $\bm{E}$.
In particular, it was shown in Proposition 3 of \citet{Wong-Lee17}
that in the regularized ERM framework,
a quadratic loss with element-wise $\ell_1$ penalty on the sparse component
is equivalent to a direct application of a Huber loss without the sparse component.
Roughly speaking, this class of robust methods, based on the low-rank-plus-sparse
structure, can be understood as regularized ERMs with Huber loss.

Another class of robust \MC{} methods is based on
the absolute deviation loss, formally defined in \eqref{eqn:sto_opt}.
The minimizer of the corresponding risk has an interpretation of median (see Section \ref{sec:lad}),
and so the regularized ERM framework that applies absolute deviation loss
is coined as median matrix completion
\citep{Elsener-Geer18,Alquier-Cottet-Lecue19}.
In the trace regression model,
if the medians of the noise variables are zero,
the median \MC{} estimator can be treated as a robust
estimation of $\bm{A}_{\star}$.
Although median is one of the most commonly used robust statistics,
the median \MC{} methods have
not been studied until recently.
\citet{Elsener-Geer18} derived the asymptotic behavior of the
trace-norm regularized estimators under both the absolute deviation loss and the Huber loss. Their convergence rates match with the rate obtained in \citet{Koltchinskii-Lounici-Tsybakov11} under certain conditions. More complete asymptotic results have been developed in \citet{Alquier-Cottet-Lecue19}, which derives the minimax rates of convergence with any Lipschitz loss functions including absolute deviation loss.

To the best of our knowledge,
the only existing computational algorithm of median \MC{} in the literature is
proposed by \citet{Alquier-Cottet-Lecue19},
which is an alternating direction method of multiplier (ADMM) algorithm developed for the
quantile \MC{} with median \MC{} being a special case.
However, this algorithm is slow and not scalable to large matrices
due to the non-smooth nature of both the absolute deviation loss and the regularization term.

Despite the computational challenges,
the absolute deviation loss has a few appealing properties as
compared to the Huber loss.
First, absolute deviation loss is tuning-free while Huber loss has a tuning parameter,
which is equivalent to the tuning parameter in the entry-wise $\ell_1$ penalty
in the low-rank-plus-sparse model.
Second, absolute deviation loss is generally more robust than Huber loss.
Third, the minimizer of expected absolute deviation loss is naturally tied to median, and is generally
more interpretable than the minimizer of expected Huber loss (which may vary with its tuning parameter).

\subsection{Our Goal and Contributions}

Our goal is to
develop a robust and scalable estimator
for median \MC{}, in large-scale problems.
The proposed estimator approximately solves
the regularized ERM with the non-differentiable absolute deviation loss.
It is obtained through two major stages ---
(1) a fast and simple initial estimation via embarrassingly parallel computing
and (2) a refinement stage based on pseudo data.
As pointed out earlier (with more details in Section \ref{sec:BLADMC}), existing computational strategy \citep{Alquier-Cottet-Lecue19}
does not scale well with the dimensions of the matrix.
Inspired by \citet{Mackey-Talwalkar-Jordan15},
a simple strategy is to divide the target matrix into small sub-matrices and
perform median \MC{}
on every sub-matrices in an \textit{embarrassingly} parallel fashion,
and then naively concatenate all estimates of these sub-matrices to form an initial estimate of the target matrix.
Therefore, most computations are done on
much smaller sub-matrices, and hence this computational strategy is much more scalable.
However,
since low-rankness is generally a global (whole-matrix) structure,
the lack of communications between the computations of different
sub-matrices
lead to sub-optimal estimation \citep{Mackey-Talwalkar-Jordan15}.
The key innovation of this paper is a fast refinement stage,
which transforms the regularized ERM with absolute deviation loss into a
regularized ERM with quadratic loss, for which many fast algorithms
exist, via the idea of \textit{pseudo data}.
Motivated by \citet{Chen-Liu-Mao19}, we
develop the pseudo data based on
a Newton-Raphson iteration in expectation.
The construction of the pseudo data requires only a rough initial estimate (see
Condition (C6) in Section \ref{sec:theory}), which is obtained in the first stage. As compared to Huber-loss-based methods (sparse-plus-low-rank
model), the underlying absolute deviation loss is non-differentiable, leading
to computational difficulty for large-scale problems.
The proposed strategy involves a novel refinement stage
to efficiently combine and improve the embarrassingly parallel sub-matrix estimations.

We are able to theoretically show that this refinement stage
can improve the convergence rate of the sub-optimal initial estimator
to near-optimal order,
as good as the computationally expensive median \MC{} estimator of \citet{Alquier-Cottet-Lecue19}. 
To the best of our knowledge, this theoretical guarantee for distributed
computing
is the first of its kind in the literature of matrix completion.

\section{Model and Algorithms}
\subsection{Regularized Least Absolute Deviation Estimator}\label{sec:lad}
Let $\bA_{\star}=(A_{\star,ij})_{i,j=1}^{n_1,n_2}\in \mbR^{n_1\times n_2}$ be an
unknown high-dimensional matrix. Assume the $N$ pairs of observations
$\{(\bX_k,Y_k)\}_{k=1}^N$ satisfy the trace regression model
\eqref{eqn:tracereg}
with noise $\{\varepsilon_k\}_{k=1}^N$.
The design matrices are assumed to be i.i.d.~random matrices
that take values in $\mathcal{X}$ \eqref{eqn:canonical-basis}.
Let $\pi_{st}=\Pr(\bX_k=\be_s(n_1)\be_t^{\rm T}(n_2))$ be the probability
of observing (a noisy realization of) the $(s,t)$-th entry of $\bA_{\star}$ and denote
$\bm{\Pi}=(\pi_{1,1},\dots,\pi_{n_1,n_2})^{\rm T}$.
Instead of the uniform sampling where $\pi_{st}\equiv \pi$
\citep{Koltchinskii-Lounici-Tsybakov11,Rohde-Tsybakov11,Elsener-Geer18},
out setup allows sampling probabilities to be different across entries, such as in
\citet{Klopp14,Lafond15,Cai-Zhou16,Alquier-Cottet-Lecue19}. See Condition (C1)
for more details.
Overall, $(Y_1, \bX_1, \varepsilon_1), \dots, (Y_N, \bX_N, \varepsilon_N)$ are i.i.d. tuples of random variables.
For notation's simplicity,
we let $(Y, \bX, \varepsilon)$ be a generic independent tuple of
random variables that have the same distribution as $(Y_1,\bX_1,\varepsilon_1)$.
Without additional specification, the noise variable $\varepsilon$ is not identifiable.
For example, one can subtract a constant from all entries of $\bA_\star$
and add this constant to the noise.
To identify the noise, we assume
$\mbP(\epsilon\le 0)=0.5$,
which naturally leads to an interpretation of
$\bA_{\star}$ as median, i.e., $A_{\star,ij}$ is the median of $Y\mid \bX=\be_i(n_1)\be_j(n_2)^{\rm T}$.
If the noise distribution is symmetric
and light-tailed (so that the expectation exists),
then $\mbE(\varepsilon_k)=0$, and
$\bA_{\star}$ is the also the mean matrix ($A_{\star,ij}=\mbE(Y\mid \bX=\be_i(n_1)\be_j(n_2)^{\rm T})$),
which aligns with the target of common \MC{} techniques \citep{Elsener-Geer18}.
Let $f$ be the probability density function of the noise.
For the proposed method, the required condition of $f$ is specified in Condition (C3) of Section \ref{sec:theory},
which is fairly mild and is satisfied by many heavy-tailed distributions
whose expectation may not exist.

Define a hypothesis class $\mathcal{B}(a,n,m)=\{\bA\in\mbR^{n\times
	m}:\norm{\bA}_{\infty}\le a\}$
where $a>0$ such that $\bA_{\star}\in \mathcal{B}(a, n, m)$.
In this paper, we use the absolute deviation loss instead of the common
quadratic loss
\citep[e.g.,][]{Candes-Plan10,Koltchinskii-Lounici-Tsybakov11,Klopp14}.
According to Section 4 of the Supplementary Material \citep{Elsener-Geer18},
$\bA_\star$ is also characterized as the minimizer of the
population risk:
\begin{equation}\label{eqn:sto_opt}
\bA_{\star}=\underset{\bA\in\mathcal{B}(a,n_1,n_2)}{\arg\min}\mbE\left\{\Abs{Y-\tr(\bX^{\rm T}\bA)}\right\}.
\end{equation}
To encourage a low-rank solution, one natural candidate is
the following regularized empirical risk estimator \citep{Elsener-Geer18,Alquier-Cottet-Lecue19}:
\begin{align}\label{eqn:candt}
\wh\bA_{\text{LADMC}}=\underset{\bA\in\mathcal{B}(a,n_1,n_2)}{\arg\min}&\frac{1}{N}\sum_{k=1}^{N}\Abs{Y_{k}-\tr(\bX_k^{\rm T}\bA)}\nonumber\\
&+\lambda_{N}^{\prime}\Norm{\bA}_{\ast},
\end{align}
where $\norm{\bA}_{\ast}$ denotes the nuclear norm and $\lambda_{N}^{\prime}\ge 0$ is
a tuning parameter. The nuclear norm is a convex relaxation of the rank which
flavors the optimization and analysis of the statistical property \citep{Candes-Recht09}.

Due to non-differentiability of the absolute deviation loss, the objective function in \eqref{eqn:sto_opt} is the sum of two non-differentiable
terms, rendering common computational strategies based on
proximal gradient method \citep[e.g.,][]{Mazumder-Hastie-Tibshirani10, Wong-Lee17} inapplicable.
To the best of our knowledge,
there is only one existing computational algorithm for \eqref{eqn:sto_opt},
which is based on a direct application of alternating direction method of multiplier (ADMM) \citep{Alquier-Cottet-Lecue19}.
However, this algorithm is slow and not scalable in practice, when the
sample size and the matrix dimensions are large,
possibly due to the non-differentiable nature of the loss.

We aim to derive a computationally efficient method
for estimating the median matrix $\bA_\star$ in large-scale \MC{} problems.
More specifically, the proposed method
consists of two stages:
(1) an initial estimation via distributed computing (Section \ref{sec:BLADMC})
and (2) a refinement stage to achieve near-optimal estimation (Section \ref{sec:refine}).

\subsection{Distributed Initial Estimator}\label{sec:BLADMC}

\begin{figure}
	\centering
	\includegraphics[width=7cm]{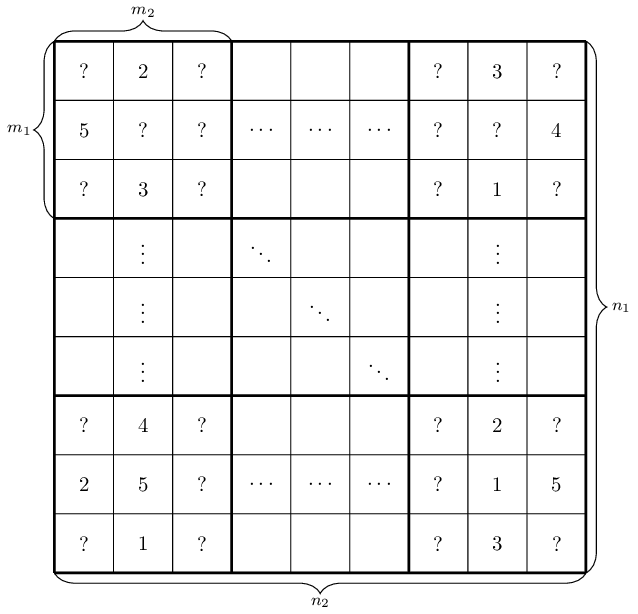}
	\caption{An example of dividing a matrix into sub-matrices.}
	\label{fig:submatrices}
\end{figure}

Similar to many large-scale problems,
it is common to harness distributed computing
to overcome computational barriers.
Motivated by \citet{Mackey-Talwalkar-Jordan15},
we divide the underlying matrix into several sub-matrices,
estimate each sub-matrix separately in an embarrassingly parallel fashion
and then combine them to form a computationally efficient (initial) estimator of $\bA_{\star}$.

For the convenience of notations, suppose there exist integers $m_1$, $m_2$, $l_1$ and
$l_2$ such that $l_1=n_1/m_1$ and $l_2=n_2/m_2$.
(Otherwise, the following description can be easily extended
with
$l_1=\lfloor n_1/m_1\rfloor$ and $l_2=\lfloor n_2/m_2 \rfloor$
which leads to slightly different sizes in several sub-matrices.)
We divide the row indices $1,\dots, n_1$ into $l_1$ subsets evenly where each subset contains $m_1$ index and similarly divide the column indices $1,\dots, n_2$ into $l_2$ subsets evenly.
Then we obtain $l_1l_2$ sub-matrices, denoted by
$\bA_{\star,l}\in \mathbb{R}^{m_1\times m_2}$ for $l=1,\dots, l_1l_2$.
See Figure \ref{fig:submatrices} for a pictorial illustration.
Let $\bm{\Omega}_l$ be the index set of the observed entries within the $l$-th
sub-matrix $\bA_{\star,l}$, and $N_l$ be the corresponding number of observed entries.
Next, we apply the ADMM algorithm of \citet{Alquier-Cottet-Lecue19}
to each sub-matrix $\bA_{\star,l}$ and obtain
corresponding median estimator:
\begin{align}\label{eqn:intial}
\wh\bA_{\text{LADMC},l}=\underset{\bA_{l}\in\mathcal{B}(a,m_1,m_2)}{\arg\min}&\frac{1}{N_{l}}\sum_{k\in\bm{\Omega}_{l}}\Abs{Y_{k}- \tr(\bX_{l,k}^{\rm T}\bA_{l})}\nonumber\\
&+\lambda_{N_{l},l}\Norm{\bA_l}_{\ast},
\end{align}
where $\bX_{l,k}$ is a corresponding sub-design matrix of dimensions $m_1\times m_2$ and
$\lambda_{N_{l},l}\ge 0$ is a tuning parameter.
Note that the most computationally intensive sub-routine in the ADMM algorithm of \citet{Alquier-Cottet-Lecue19} is (repeated applications of) SVD.
For sub-matrices of dimension $m_1\times m_2$, the computational complexity of a single SVD reduced from $\mathcal{O}(n_1^2n_2+n_1n_2^2)$ to $\mathcal{O}(m_1^2m_2+m_1m_2^2)$.

After we have all the $\wh\bA_{\text{LADMC},l}$ for $l=1,\dots,l_1l_2$, we can put these estimators of the sub-matrices back together according to their original positions in the target matrix (see Figure \ref{fig:submatrices}),
and form an initial estimator $\wh \bA_{\text{LADMC},0}$.

This computational strategy is conceptually simple and easily implementable.
However, despite the low-rank estimations for each sub-matrix, combining them directly cannot guarantee low-rankness of $\wh \bA_{\text{LADMC},0}$.
Also, the convergence rate of $\wh \bA_{\text{LADMC},0}$
is not guaranteed to be (near-)optimal,
as long as $m_1, m_2$ are of smaller order than $n_1, n_2$ respectively.
See Theorem \ref{thm:init}(i) in
Section \ref{sec:theory}.
However, for computational benefits,
it is desirable to choose small $m_1, m_2$.
In the next section, we leverage this initial estimator
and formulate a refinement stage.

\subsection{The Idea of Refinement}
\label{sec:refine}

The proposed refinement stage is based on
a form of pseudo data, which
leverages the idea from the Newton-Raphson iteration.
To describe this idea, we start from the stochastic optimization problem
\eqref{eqn:sto_opt}. 
Write the loss function as $L(\bA;\{Y,\bX\}) = \abs{Y- \tr(\bX^{\rm T}\bA)}$.
To solve this stochastic optimization problem, the population version of the Newton-Raphson iteration takes the following form
\begin{align}\label{eq:onestep}
\text{vec}(\bA_{1})=\text{vec}(\wh\bA_{0})
-\bH(\wh \bA_{0})^{-1}\mbE_{(Y,\bX)}\left[\bm{l}(\wh\bA_{0};\{Y,\bX\})\right],
\end{align}
where $(Y, \bX)$ is defined in Section \ref{sec:lad} to be independent of the data; $\text{vec}(\bA)$ is the vectorization of the matrix $\bA$; $\wh
\bA_{0}$ is an initial estimator (to be specified below); and $\bm{l}(\bA;\{Y,\bX\})$ is the sub-gradient of $L(\bA;\{Y,\bX\})$ with respect to $\text{vec}(\bA)$. 
One can show that the population Hessian matrix
takes the form $\bH(\bA)=2\mbE_{(Y,\bX)}(f\{\tr(\bX^{\rm T}(\bA-\bA_{\star})\})\diag(\bm{\Pi})$,
where we recall that $\bm{\Pi}=(\pi_{1,1},\dots,\pi_{n_1,n_2})^{\rm T}$
is the vector of observation probabilities; and
$\diag(\cdot)$ transforms a vector into a diagonal matrix
whose diagonal is the vector.
Also, it can be shown that
$\mbE_{(Y,\bX)}[\bm{l}(\bA;\{Y,\bX\})]=\bm{\Pi}\mbE_{(Y,\bX)}\{2\Ind{Y-\tr(\bX^{\rm T}\bA)\leq 0}-1\}$. 
Recall that $f(x)$ is the density function of
the noise $\epsilon$.

By using $\bH(\bA_{\star})= 2f(0)\diag(\bm{\Pi})$ in
\eqref{eq:onestep},
we obtain the following approximation.
When the initial estimator $\wh \bA_{0}$ is close to the minimizer
$\bA_{\star}$,
\begingroup
\allowdisplaybreaks
\begin{align}\label{eqn:motivate}
&\text{vec}(\bA_{1}) \nonumber
\approx \text{vec}(\wh \bA_{0})\nonumber\\
&\qquad -[2 f(0)\diag(\bm{\Pi})]^{-1}\mbE_{(Y,\bX)} [\bm{l}(\wh  \bA_{0};\{Y,\bX\})] \nonumber\\
&=\mbE_{(Y,\bX)}\left\{\text{vec}(\wh \bA_{0})\right. \nonumber\\
&\qquad -\left.[f(0)]^{-1}\left(\Ind{Y\leq\tr(\bX^{\rm T}\wh\bA_{0})}-\frac{1}{2}\right)\mathds{1}_{n_1n_2}\right\}\nonumber\\
&=[\diag(\bm{\Pi})]^{-1}\mbE_{(Y,\bX)}\left[\text{vec}(\bX)\left\{\text{vec}(\bX)^{\rm T}\text{vec}(\wh \bA_{0})\right.\right.\nonumber\\
&\qquad\left.\left.-[f(0)]^{-1}\left(\Ind{Y\leq\tr(\bX^{\rm T}\wh\bA_{0})}-\frac{1}{2}\right)\right\}\right]\nonumber\\
&=[\diag(\bm\Pi)]^{-1}\mbE_{(Y,\bX)}\left(\mathrm{vec}(\bX)\tilde{Y}^o\right)\\
&= \{\mbE_{(Y,\bX)}[\text{vec}(\bX)\text{vec}(\bX)^{\rm T}]\}^{-1} \mbE_{(Y,\bX)}\left(\text{vec}(\bX)\tilde{Y}^0\right)\nonumber
\end{align}
\endgroup
where we define the theoretical pseudo data
\[
\wt Y^{o}=\tr(\bX^{\rm T}\wh\bA_{0})-[f(0)]^{-1}\left(\Ind{Y\le \tr(\bX^{\rm T}\wh\bA_{0})}-\frac{1}{2}\right).
\] 
Here $\mathds{1}_{n_1n_2}$ denotes the vector of dimension $n_1n_2$ with all elements equal to 1.
Clearly, \eqref{eqn:motivate} is the vectorization of the solution to
$\arg\min_{\bA}\mbE_{(Y, \bX)}\{\wt Y^{o}-\tr(\bX^{\rm T}\bA)\}^{2}$, where $\tr(\bX^{\rm T}\bA)\} = \text{vec}(\bX)^{\rm T}\text{vec}(\bA)$.
From this heuristic argument, we can approximate the population Newton update
by a least square solution based on the pseudo data $\tilde{Y}^0$,
when we start from an $\wh\bA_0$ close enough to $\bA_\star$.
Without the knowledge of $f(0)$,
the pseudo data cannot be used.
In the above, $f(0)$ can be easily estimated by the kernel density estimator: 
\[
\wh f(0)=\frac{1}{Nh}\sum_{k=1}^{N}K\left(\frac{Y_{k}- \tr(\bX_k^{\rm T}\wh\bA_{0})}{h}\right),
\]
where $K(x)$ is a kernel function which satisfies Condition (C4) and $h>0$ is
the bandwidth.
For each $1\le k\le N$, we define the actual pseudo data $\wt\bY$ used in our proposed procedure to be
\[
\wt Y_{k}=\tr(\bX_k^{\rm T}\wh\bA_{0})-[\wh f(0)]^{-1}\left(\Ind{Y_{k}\le \tr(\bX_k^{\rm T}\wh\bA_{0})}-\frac{1}{2}\right),
\]
and $\wt\bY=(\wt Y_{k})$. 
For finite sample, regularization is imposed to estimate
the high-dimensional parameter $\bA_\star$.
By using $\wt\bY$, one natural candidate for the estimator of $\bA_{\star}$ is given by
\begin{align}\label{eqn:Ahat1}
\wh\bA=\underset{\bA\in\mathcal{B}(a,n_1,n_2)}{\arg\min}&\frac{1}{N}\sum_{k=1}^{N}\left(\wt Y_{k}-\tr(\bX_k^{\rm T}\bA)\right)^2\nonumber\\
&+\lambda_{N}\Norm{\bA}_{\ast},
\end{align}
where $\norm{\cdot}_{\ast}$ is the nuclear norm and $\lambda_{N}\ge 0$ is the tuning
parameter.
If $\wt \bY$ is replaced by $\bY$, the optimization \eqref{eqn:Ahat1}
is a common nuclear-norm regularized empirical risk estimator with quadratic loss
which has been well studied in the literature
\citep{Candes-Recht09,Candes-Plan10,Koltchinskii-Lounici-Tsybakov11,Klopp14}.
Therefore, with the knowledge of $\wt\bY$,
corresponding computational algorithms can be adopted to solve
\eqref{eqn:Ahat1}.
Note that the pseudo data are based on an initial estimator $\wh\bA_0$.
In Section \ref{sec:theory}, we
show that any initial estimator that fulfills Condition (C5)
can be improved by \eqref{eqn:Ahat1}, which is therefore
called a refinement step.
It is easy to verify that
the initial estimator $\wh \bA_{\text{LADMC},0}$
in Section \ref{sec:BLADMC}
fulfills such condition.
Note that the initial estimator, like $\wh\bA_{\text{LADMC},0}$, introduces
complicated dependencies among the entries of $\wt \bY$,
which brings new challenges in analyzing \eqref{eqn:Ahat1},
as opposed to the common estimator based on $\bY$ with independent entries.

From our theory (Section \ref{sec:theory}),
the refined estimator \eqref{eqn:Ahat1} improves
upon the initial estimator.
Depending on how bad the initial estimator is,
a single refinement step may not be good enough to
achieve a (near-)optimal estimator.
But this can remedied by reapplying
the refinement step again and again. In Section \ref{sec:theory},
we show that a finite number of application of the refinement step is
enough. In our numerical experiments, 4--5 applications would usually
produce enough improvement.
Write $\wh\bA^{(1)}=\wh\bA$ given in \eqref{eqn:Ahat1} as the estimator from the first iteration and we can construct an iterative procedure to estimate $\bA_{\star}$. In particular, let $\wh\bA^{(t-1)}$ be the estimator in the $(t-1)$-th iteration. Define 
\[
\wh f^{(t)}(0)=\frac{1}{Nh_{t}}\sum_{k=1}^{N}K\left(\frac{Y_{k}-\tr(\bX_k^{\rm T} \wh\bA^{(t-1)})}{h_{t}}\right),
\]
where $K(x)$ is the same smoothing function used to estimate $f(0)$ in the first step and $h_{t}\to0$ is the bandwidth for the $t$-th iteration. Similarly, for each $1\le k\le N$, define
\begin{align}\label{eqn:pseudoYt}
\wt Y_{k}^{(t)}=&\tr(\bX_k^{\rm T}\wh\bA^{(t-1)})- \left(\wh f^{(t)}\left(0\right)\right)^{-1}\times\nonumber\\
&\left(\Ind{Y_{k}\le \tr(\bX_k^{\rm T}\wh\bA^{(t-1)})}-\frac{1}{2}\right).
\end{align}
We propose the following estimator
\begin{align}\label{eqn:Ahatt}
\wh\bA^{(t)}=\underset{\bA\in\mathcal{B}(a,n_1,n_2)}{\arg\min}&\frac{1}{N}\sum_{k=1}^{N}\left(\wt Y_{k}^{(t)}-\tr(\bX_k^{\rm T}\bA)\right)^2\nonumber\\
&+\lambda_{N,t}\Norm{\bA}_{\ast},
\end{align}
where $\lambda_{N,t}$ is the tunning parameter in the $t$-th iteration.
To summarize, we list the full algorithm in Algorithm \ref{alg:DLADMC}.

\begin{algorithm}[!t]
	\caption{{\small Distributed Least Absolute Deviation Matrix Completion}}
	\label{alg:DLADMC}
	{\textbf{Input:} Observed data pairs $\{\bX_k,Y_k\}$ for $k=1,\ldots, N$, number of observations $N$, dimensions of design matrix $\bX$ $n_1,n_2$, dimensions of sub-matrices to construct the initial estimator $m_1,m_2$ and the split subsets $\bm{\Omega}_{l}$ for $l=1,\dots,l_1l_2$, kernel function $K$, a sequence of bandwidths $h_{t}$ and the regularization parameters $\lambda_{N,t}$ for $t=1,\dots,T$.}
	
	\begin{algorithmic}[1]
		\STATE Get the robust low-rank estimator of each $\bA_{\star,l}$ by solving the minimization problem \eqref{eqn:intial} in parallel.
		\STATE Set $\wh\bA^{(0)}$ to be the same as the initial estimator $\wh\bA_{\text{LADMC},0}$ by putting $\wh\bA_{\text{LADMC},l}$ together.
		\FOR{$t=1,2 \ldots, T$}
		\STATE Compute $\wh f^{(t)}(0):=(Nh_{t})^{-1}\sum_{k=1}^{N}K(h_{t}^{-1}(Y_{k}-\tr\{\bX_{k}^{\rm T}\wh\bA^{(t-1)}\}))$. 
		\STATE Construct the pseudo data $\{\wt Y_{k}^{(t)}\}$ by equation \eqref{eqn:pseudoYt}.
		\STATE Plugin the pseudo data $\{\wt Y_{k}^{(t)}\}$ and compute the estimator $\wh\bA^{(t)}$ by solving the minimization problem \eqref{eqn:Ahatt}.
		\ENDFOR
	\end{algorithmic}
	\textbf{Output:}  The final estimator $\wh\bA^{(T)}$.
\end{algorithm}

\section{Theoretical Guarantee}\label{sec:theory}
To begin with, we introduce several notations.
Let $m_{+}=m_1+m_2$, $m_{\max}=\max\{m_1,m_2\}$ and $m_{\min}=\min\{m_1,m_2\}$.
Similarly, write $n_{+}=n_1+n_2$, $n_{\max}=\max\{n_1,n_2\}$ and
$n_{\min}=\min\{n_1,n_2\}$.
For a given matrix $\bA=(A_{ij})\in\mbR^{n_1\times n_2}$, denote
$\sigma_{i}(\bA)$ be the $i$-th largest singular value of matrix $\bA$. Let
$\norm{\bA}=\sigma_{1}(\bA)$,
$\norm{\bA}_{F}=\sqrt{\sum_{i=1}^{n_1}\sum_{j=1}^{n_2}
	A_{ij}^2}$ and $\norm{\bA}_{\ast}=\sum_{i=1}^{n_{\min}}\sigma_{i}(\bA)$ be
the spectral norm (operator norm), the infinity norm, the Frobenius norm and the trace norm of a matrix $\bA$ respectively. Define a class of matrices
$\mathcal{C}_{\ast}(n_1,n_2)=\{\bA\in\mbR^{n_1\times n_2}:\norm{\bA}_{\ast}\le
1\}$. Denote the rank of matrix $\bA_{\star}$
by $r_{\star}=\text{rank}(\bA_{\star})$
for simplicity.
With these notations, we describe the following conditions
which are useful in our theoretical analysis.

{\bf (C1)} For each $k=1,\dots,N$, the design matrix $\bX_k$ takes value in the canonical basis $\mathcal{X}$ as defined in \eqref{eqn:canonical-basis}. There exist positive constants $\underline{c}$ and $\overline{c}$ such that for any $(s,t)\in\{1,\dots,n_1\}\times\{1,\dots,n_2\}$, $\underline{c}/(n_1n_2)\le \Pr(\bX_k=\be_s(n_1)\be_t^{\rm T}(n_2))\le\overline{c}/(n_1n_2)$.

{\bf (C2)} The local dimensions $m_1,m_2$ on each block satisfies $m_1\ge n_1^c$ and $m_2\ge n_2^c$ for some $0<c<1$. The number of observations in each block $N_l$ are comparable for all $l=1,\dots,l_1l_2$, i.e, $N_l\asymp m_1m_2N/(n_1n_2)$.

{\bf (C3)} The density function $f(\cdot)$ is Lipschitz continuous (i.e.,
$\abs{f(x)-f(y)}\le C_L\abs{x-y}$ for any $x,y\in\mbR$ and some constant
$C_L>0$). Moreover,
there exists a constant $c>0$ such that
$f(u)\ge c$ for any $\abs{u}\le2a$.
Also, $\Pr(\epsilon_{k} \leq 0)=0.5$ for each $k=1,\dots,N$.

\begin{thm}[\citet{Alquier-Cottet-Lecue19}, Theorem 4.6, Initial estimator]\label{thm:init}
	Suppose that Conditions (C1)--(C3) hold and
	$\bA_{\star}\in\mathcal{B}(a,n_1,n_2)$. For each $l=1,\dots,n_1n_2/(m_1m_2)$,
	assume that there exists a matrix with rank
	at most $s_l$ in $\bA_{\star,l}+(\rho_{s_l}/20)\mathcal{C}_{\ast}(m_1,m_2)$
	where
	$\rho_{s_l}=C_{\rho}(s_lm_1m_2)(\log(m_{+})/(m_{+}N_l))^{1/2}$
	with the universal constant $C_{\rho}$.
	
	(i) Then there exist universal constants $c(\underline{c},\overline{c})$ and $C$ such that with $\lambda_{N_l,l}=c(\underline{c},\overline{c})\sqrt{\log(m_{+})/(m_{\min}N_l)}$, the estimator $\wh\bA_{\text{LADMC},l}$ in \eqref{eqn:intial} satisfies

		\begin{align}\label{eqn:QMCupper}
		\frac{1}{\sqrt{m_1m_2}}\Norm{\wh\bA_{\text{LADMC},l}-\bA_{\star,l}}_{F}\le C\min\left\{\sqrt{\frac{s_{l}m_{\max}\log(m_{+})}{N_{l}}},\Norm{\bA_{\star,l}}_{\ast}^{1/2}\left(\frac{\log(m_{+})}{m_{\min}N_{l}}\right)^{1/4}\right\},
		\end{align}

	with probability at least $1-C\exp(-C s_lm_{\max}\log(m_{+}))$.
	
	(ii) Moreover, by putting these $l_1l_2$ estimators $\wh\bA_{\text{LADMC},l}$ together, for the same constant $C$ in (i), we have the initial estimator $\wh\bA_{\text{LADMC},0}$ satisfies

		\begin{align*}
		\frac{\Norm{\wh \bA_{\text{LADMC},0}-\bA_{\star}}_{F}}{\sqrt{n_1n_2}}\le C\min\left\{\sqrt{\frac{\{\sum_{l=1}^{l_1l_2} s_{l}\}m_{\max}\log(m_{+})}{N}},\right.\\
		\left.\left(\sum_{l=1}^{l_1l_2}\Norm{\bA_{\star,l}}_{\ast}^{1/2}\right)\left(\frac{m_{\max}\log(m_{+})}{n_1n_2N}\right)^{1/4}\right\},
		\end{align*}
	
	with probability at least $1-C \exp(\log(n_1n_2)- C m_{\max}\log(m_{+}))$.
\end{thm}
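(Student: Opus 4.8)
The plan is to treat part (i) as essentially quoted and to derive part (ii) by a blockwise aggregation argument. For part (i), the key observation is that each sub-problem \eqref{eqn:intial} is itself a median matrix completion problem on an $m_1\times m_2$ matrix: its design matrices are the canonical bases restricted to the $l$-th block, its sampling probabilities inherit the two-sided bound in (C1), and the noise density condition (C3) is unchanged. The stated low-rank approximation hypothesis on $\bA_{\star,l}$ (existence of a rank-$\le s_l$ matrix within a trace-norm ball of radius $\rho_{s_l}/20$) is precisely the hypothesis of Theorem 4.6 of \citet{Alquier-Cottet-Lecue19}. Hence I would obtain (i) by invoking that theorem on each block, the only verification being that the restricted sampling probabilities and the constants $\underline c,\overline c$ still satisfy its hypotheses, which is immediate from (C1)--(C2).

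For part (ii), I would first exploit that the $l_1l_2$ blocks \emph{partition} the entries of the matrix, so the squared Frobenius error decomposes exactly:
\[
\Norm{\wh\bA_{\text{LADMC},0}-\bA_{\star}}_F^2=\sum_{l=1}^{l_1l_2}\Norm{\wh\bA_{\text{LADMC},l}-\bA_{\star,l}}_F^2 .
\]
Next I would define the good event $\mathcal{E}=\bigcap_{l}\mathcal{E}_l$, where $\mathcal{E}_l$ is the probability-$(1-C\exp(-Cs_lm_{\max}\log(m_+)))$ event from (i). A union bound together with $s_l\ge 1$ and $l_1l_2=n_1n_2/(m_1m_2)\le n_1n_2$ yields
\[
\Pr(\mathcal{E}^c)\le\sum_{l}C\exp(-Cs_lm_{\max}\log(m_+))\le C\exp\big(\log(n_1n_2)-Cm_{\max}\log(m_+)\big),
\]
which is exactly the claimed probability.

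On $\mathcal{E}$ I would insert the per-block bound \eqref{eqn:QMCupper} into the decomposition. Squaring \eqref{eqn:QMCupper} gives, for each $l$, a bound of the form $\Norm{\wh\bA_{\text{LADMC},l}-\bA_{\star,l}}_F^2\le C^2 m_1m_2\min\{s_lm_{\max}\log(m_+)/N_l,\ \Norm{\bA_{\star,l}}_{\ast}(\log(m_+)/(m_{\min}N_l))^{1/2}\}$, and since $\sum_l\min\{a_l,b_l\}\le\min\{\sum_l a_l,\sum_l b_l\}$ the min-structure survives summation. Substituting $N_l\asymp m_1m_2N/(n_1n_2)$ from (C2) turns $m_1m_2/N_l$ into a factor $\asymp n_1n_2/N$; a short computation (using $m_1m_2/m_{\min}=m_{\max}$ for the second term) collapses the two summed quantities into the two terms of the claim after dividing by $n_1n_2$ and taking square roots. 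The one nonroutine step is the elementary inequality $\sum_l\Norm{\bA_{\star,l}}_{\ast}\le(\sum_l\Norm{\bA_{\star,l}}_{\ast}^{1/2})^2$, which is what produces the factor $\sum_l\Norm{\bA_{\star,l}}_{\ast}^{1/2}$ in the stated bound and makes it a valid (if loose) upper bound rather than an equality.

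The genuine content here lives in part (i), which is quoted; within part (ii) the only delicate point is the exponent bookkeeping when substituting $N_l$ and confirming the $1/4$ power in the second term, and I would double-check that the universal constant $C$ may enlarge when passing from per-block to aggregate bounds but is absorbed into a single $C$. I do not anticipate a serious obstacle, since disjointness of the blocks makes both the Frobenius decomposition and the union bound clean.
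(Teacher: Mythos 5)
Your proposal is correct and follows essentially the same route as the paper: part (i) is quoted from Theorem 4.6 of \citet{Alquier-Cottet-Lecue19} applied blockwise, and part (ii) comes from the exact Frobenius decomposition over disjoint blocks, a union bound for the probability, and the substitution $N_l\asymp m_1m_2N/(n_1n_2)$ with $m_1m_2/m_{\min}=m_{\max}$. In fact you supply more detail than the paper's own (very terse) argument, including the needed subadditivity step $\sum_l\Norm{\bA_{\star,l}}_{\ast}\le\bigl(\sum_l\Norm{\bA_{\star,l}}_{\ast}^{1/2}\bigr)^2$, all of which checks out.
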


From Theorem \ref{thm:init},
we can guarantee the convergence of the sub-matrix estimator
$\wh\bA_{\text{LADMC},l}$ when $m_1,m_2\to\infty$.
For the initial estimator $\wh \bA_{\text{LADMC},0}$, under Condition (C3) and
that all the sub-matrices are low-rank ($s_l\asymp1$ for all $l$), we require the number of observation $N\ge C_1(m_1m_2)^{-1}(n_1n_2)m_{\max}\log(m_{+})$ for some constant $C_1$ to ensure the convergence.
As for the rate of convergence, $\sqrt{(n_1n_2)m_{\max}\log(m_{+})/(Nm_1m_2)}$ is slower than the classical optimal rate $\sqrt{r_{\star}n_{\max}\log(n_{+})/N}$
when $m_1, m_2$ are of smaller than $n_1, n_2$ respectively.

{\bf (C4)} Assume the kernel functions $K(\cdot)$ is integrable with $\int_{-\infty}^{\infty}K(u)du=1$. Moreover, assume that $K(\cdot)$ satisfies $K(u)=0$ if $\abs{u}\ge 1$. Further, assume that $K(\cdot)$ is differentiable and its derivative $K^{\prime}(\cdot)$ is bounded.

{\bf (C5)} The initial estimator $\wh\bA_{0}$ satisfies $(n_1n_2)^{-1/2}\norm{\wh\bA_{0}-\bA_{\star}}_{F}=O_{\pr}((n_1n_2)^{-1/2}a_{N})$, where the initial rate $(n_1n_2)^{-1/2}a_{N}=o(1)$.

For the notation consistency, denote the initial rate $a_{N,0}=a_{N}$ and define that
\begin{align}\label{eqn:aNt}
a_{N,t}=\sqrt{\frac{r_{\star}(n_1n_2)n_{\max}\log (n_{+})}{N}}+\frac{n_{\min}}{\sqrt{r_{\star}}}\left(\frac{\sqrt{r_{\star}}a_{N,0}}{n_{\min}}\right)^{2^t}
\end{align}

\begin{thm}[Repeated refinement]\label{thm:keyt}
	Suppose that Conditions (C1)--(C5) hold and $\bA_{\star}\in\mathcal{B}(a,n_1,n_2)$. By choosing the bandwidth $h_{t}\asymp (n_1n_2)^{-1/2}a_{N,t-1}$ where $a_{N,t}$ is defined as in \eqref{eqn:aNt} and taking 
	\begin{align*}
	\lambda_{N,t}=C\left(\sqrt{\frac{\log (n_{+})}{n_{\min}N}}+\frac{a_{N,t-1}^{2}}{n_{\min}(n_1n_2)}\right),
	\end{align*}
	where $C$ is a sufficient large constant, we have
	\begin{align}\label{eqn:rmset}
	\frac{\Norm{\wh \bA^{(t)}-\bA_{\star}}_{F}^2}{n_1n_2}=
	O_{\pr}\left[\max\left\{\sqrt{\frac{\log(n_{+})}{N}},r_{\star}\left(\frac{n_{\max}\log (n_{+})}{N}+\frac{a_{N,t-1}^{4}}{n_{\min}^2(n_1n_2)}\right)\right\}\right].
	\end{align}
\end{thm}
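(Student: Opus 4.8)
The plan is to read \eqref{eqn:rmset} as a \emph{one-step} bound and prove it by induction on $t$: assuming $\wh\bA^{(t-1)}$ obeys the error level encoded by $a_{N,t-1}$ (this is Condition (C5) with $a_{N,0}=a_N$ when $t=1$, and the induction hypothesis supplied by \eqref{eqn:aNt} when $t>1$), I would show that the refined estimator $\wh\bA^{(t)}$ satisfies \eqref{eqn:rmset}. Since \eqref{eqn:Ahatt} is a nuclear-norm penalized least-squares problem in the pseudo-responses $\wt Y^{(t)}_k$, I would start from the optimality of $\wh\bA^{(t)}$ to get the basic inequality
\[
\frac1N\sum_{k=1}^N\bigl(\tr(\bX_k^{\rm T}\bDelta_t)\bigr)^2 \le 2\Inner{\bM_t}{\bDelta_t} + \lambda_{N,t}\bigl(\norm{\bA_{\star}}_{\ast} - \norm{\wh\bA^{(t)}}_{\ast}\bigr),
\]
where $\bDelta_t := \wh\bA^{(t)}-\bA_{\star}$, $\xi^{(t)}_k := \wt Y^{(t)}_k - \tr(\bX_k^{\rm T}\bA_{\star})$, the bracket is the Frobenius inner product, and $\bM_t := N^{-1}\sum_k \xi^{(t)}_k\bX_k$ is the effective-noise matrix. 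Using trace duality $\Inner{\bM_t}{\bDelta_t}\le\norm{\bM_t}\,\norm{\bDelta_t}_{\ast}$, together with a restricted-strong-convexity lower bound for the empirical quadratic form on the canonical-basis design — valid on the effective cone because $\wh\bA^{(t)},\bA_{\star}\in\mathcal B(a,n_1,n_2)$ forces $\norm{\bDelta_t}_{\infty}\le 2a$ — reduces the whole problem to bounding $\norm{\bM_t}$ and choosing $\lambda_{N,t}$ to dominate it.

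The crux is thus the spectral norm of $\bM_t$. Writing $\delta_k := \tr(\bX_k^{\rm T}(\wh\bA^{(t-1)}-\bA_{\star}))$ and using $Y_k=\tr(\bX_k^{\rm T}\bA_{\star})+\epsilon_k$, the defining indicator collapses to $\Ind{Y_k\le\tr(\bX_k^{\rm T}\wh\bA^{(t-1)})}=\Ind{\epsilon_k\le\delta_k}$, so that
\[
\xi^{(t)}_k = \delta_k - \bigl(\wh f^{(t)}(0)\bigr)^{-1}\Bigl(\Ind{\epsilon_k\le\delta_k}-\tfrac12\Bigr).
\]
I would split this into three pieces by inserting $F(\delta_k)$ and Taylor-expanding the noise cdf $F$ about $0$ (legitimate since $\mbP(\epsilon\le0)=\tfrac12$ and $f$ is Lipschitz by (C3)), giving $F(\delta_k)-\tfrac12=f(0)\delta_k+R_k$ with $\abs{R_k}\le\tfrac{C_L}{2}\delta_k^2$: (i) a \emph{clean-noise} matrix $N^{-1}\sum_k(\wh f^{(t)}(0))^{-1}(\Ind{\epsilon_k\le0}-\tfrac12)\bX_k$, whose summands do not involve $\wh\bA^{(t-1)}$ and which a matrix Bernstein inequality controls at the optimal order $\sqrt{\log(n_{+})/(n_{\min}N)}$; (ii) a \emph{curvature/bias} matrix collecting $\delta_k(1-f(0)/\wh f^{(t)}(0))$ and $(\wh f^{(t)}(0))^{-1}R_k$, whose entries are $O(\delta_k^2)$ plus $O(\abs{\delta_k}\,\abs{\wh f^{(t)}(0)-f(0)})$; and (iii) an \emph{empirical-process increment} matrix built from the centered increments $[\Ind{\epsilon_k\le\delta_k}-\Ind{\epsilon_k\le0}]-[F(\delta_k)-F(0)]$. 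The $\delta_k^2$ part of (ii) is what produces the term $r_{\star}a_{N,t-1}^4/(n_{\min}^2 n_1n_2)$, since the resulting sums are quadratic in $\wh\bA^{(t-1)}-\bA_{\star}$. In parallel I would run a bias–variance analysis of the kernel estimate, showing $\abs{\wh f^{(t)}(0)-f(0)}=O_{\pr}(h_t+(Nh_t)^{-1/2})$ with a $\delta_k$-induced smoothing bias of order $(n_1n_2)^{-1/2}\norm{\wh\bA^{(t-1)}-\bA_{\star}}_{F}$; the calibration $h_t\asymp(n_1n_2)^{-1/2}a_{N,t-1}$ is exactly what balances these contributions so that the first part of (ii) is no larger than its $\delta_k^2$ part.

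The main obstacle is term (iii) together with the statistical dependence between $\wh\bA^{(t-1)}$, $\wh f^{(t)}(0)$ and the noise $\{\epsilon_k\}$ in the pseudo-data: because each $Y_k$ is reused both in forming $\wh\bA^{(t-1)}$ and in the residuals, the summands of $\bM_t$ are not independent, so a conditional Bernstein bound is not immediately available. I would control (iii) by a uniform deviation bound over a confidence region $\mathcal D=\{\bA:\norm{\bA-\bA_{\star}}_{F}\le C a_{N,t-1},\ \norm{\bA}_{\infty}\le a\}$ that contains $\wh\bA^{(t-1)}$ with high probability, exploiting that $\Ind{\epsilon_k\le\delta_k}-\Ind{\epsilon_k\le0}$ is nonzero only when $\epsilon_k$ lies between $0$ and $\delta_k$, whence its conditional variance is $O(\abs{\delta_k})$ and shrinks with $\norm{\wh\bA^{(t-1)}-\bA_{\star}}_{F}$. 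The delicate point is that the initial estimator is \emph{not} low rank, so a naive Frobenius-covering of $\mathcal D$ is far too large; the resolution is to combine the $\norm{\cdot}_{\infty}$ constraint with the smoothness of $F$ in a localized (symmetrization plus contraction) empirical-process argument, so that the increment's contribution to $\norm{\bM_t}$ is of smaller order than the clean-noise and curvature terms, leaving a residual that cannot be pushed below $\sqrt{\log(n_{+})/N}$ and accounts for the first entry of the maximum in \eqref{eqn:rmset}.

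Finally I would assemble the pieces: substituting the bounds on $\norm{\bM_t}$ and the stated $\lambda_{N,t}$ into the restricted-strong-convexity inequality and using $\norm{\bDelta_t}_{\ast}\le\sqrt{2r_{\star}}\,\norm{\bDelta_t}_{F}$ on the effective cone yields the one-step bound \eqref{eqn:rmset}. To close the induction I would verify that plugging $a_{N,t-1}$ into the right-hand side and invoking the definition \eqref{eqn:aNt} reproduces $a_{N,t}$: the curvature term maps $a_{N,t-1}$ to $\sqrt{r_{\star}}\,a_{N,t-1}^2/n_{\min}$, i.e.\ $b_t=b_{t-1}^2$ for $b_t:=\sqrt{r_{\star}}\,a_{N,t}/n_{\min}$, so the contraction is quadratic and produces the doubly-exponential factor $(\sqrt{r_{\star}}a_{N,0}/n_{\min})^{2^t}$ on top of the irreducible optimal rate. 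This reconciles the one-step improvement with \eqref{eqn:aNt} and completes the argument.
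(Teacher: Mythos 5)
Your proposal is correct and follows essentially the same route as the paper: the paper likewise reduces everything to the spectral norm of the effective-noise matrix $\bm{\Sigma}^{(t)}$ via the basic inequality, trace duality and the Klopp-style restricted-strong-convexity argument, and decomposes $\bm{\Sigma}^{(t)}$ into exactly your three pieces --- a curvature/Taylor term $\bH_{N}(\wh\bA^{(t-1)})$, a clean-noise term $(N\wh f(0))^{-1}\sum_{i}[\bX_{i}\Ind{\epsilon_{i}\le 0}-\bX_{i}f(0)]$, and a uniform increment term $U_{N}=\sup_{\norm{\bA-\bA_{\star}}_{F}\le a_{N}}\norm{\bB_{N}(\bA)}$ --- with the same kernel-density bound $\abs{\wh f(0)-f(0)}=O_{\pr}(\sqrt{\log(n_{+})/(Nh)}+a_{N}/\sqrt{n_1n_2})$, the same bandwidth balancing, and the same quadratic recursion yielding the $2^{t}$ exponent in \eqref{eqn:aNt}. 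The only difference is a technical device for the uniform bound on the increment process (you suggest symmetrization plus contraction, whereas the paper uses an explicit discretization of the Frobenius ball combined with the exponential inequality of Cai--Liu and an $\mathcal{E}$-net over $(\bu,\bv)$), which does not change the substance of the argument.
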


When the iteration number $t=1$, it means one-step refinement from the initial estimator $\wh\bA_{0}$. For the right hand side of \eqref{eqn:rmset}, it is noted that both the first term $\sqrt{\log(n_{+})/N}$ and the second term $r_{\star}n_{\max}\log(n_{+})/N$ are
seen in the error bound of existing works \citep{Elsener-Geer18,Alquier-Cottet-Lecue19}.
The bound has an extra third term $r_{\star}a_{N,0}^4/(n_{\min}^2(n_1n_2))$ due to
the initial estimator.
After one round of refinement, one can see that the third term $r_{\star}a_{N,0}^4/(n_{\min}^2(n_1n_2))$ in \eqref{eqn:rmset}
is faster than
$a_{N,0}^2/(n_1n_2)$, the convergence rate of the initial estimator (see Condition (C5)),
because $r_{\star}n_{\min}^{-2}a_{N,0}^2=o(1)$.

With the increasing of the iteration number $t$, Theorem \ref{thm:keyt} shows that the estimator can be refined
again and again, until near-optimal rate of convergence is achieved. It can be shown that when the iteration number $t$ exceeds certain number, i.e,
\[
t\ge\log\left\{\frac{\log(r_{\star}^2n_{\max}^2\log(n_{+}))-\log(n_{\min}N)}{c_{0}\log(r_{\star}a_{N,0}^2)-2c_{0}\log(n_{min})}\right\}/\log(2),
\]
for some $c_{0}>0$, the second term in the term associated with $r_{\star}$ is dominated by the first term and the convergence rate of $\wh \bA^{(t)}$ becomes $r_{\star}n_{\max}N^{-1}\log(n_{+})$ which is the near-optimal rate $r_{\star}n_{\max}N^{-1}$ (optimal up to a logarithmic factor). Note that the number of iteration $t$ is usually small due to the logarithmic transformation. 

\subsection{Main Lemma and Proof Outline}
For the $t$-th refinements, let $\xi_{k}^{(t)}=\wt
Y_{k}^{(t)}-\inner{\bX_k}{\bA_{\star}}$ be the residual of the pseudo data.
Also, define the stochastic terms
$\bm{\Sigma}^{(t)}=N^{-1}\sum_{k=1}^{N}\xi_k^{(t)}\bX_k$.
To provide an upper bound of $(n_1n_2)^{-1}\norm{\wh \bA^{(t)}-\bA_{\star}}_{F}^2$ in Theorem \ref{thm:keyt}, we follow the standard arguments, as used in corresponding key theorems in, e.g., \citet{Koltchinskii-Lounici-Tsybakov11,Klopp14}.
The key is to control the spectral norm of the stochastic term $\bm{\Sigma}^{(t)}$.
A specific challenge of our setup  is 
the dependency among the residuals $\{\xi_i^{(t)}\}$.
We tackle this by the following lemma:

\begin{lem}\label{lem:errort}
	Suppose that Conditions (C1)--(C5) hold and $\bA_{\star}\in\mathcal{B}(a,n_1,n_2)$. For any iteration $t\ge 1$, we choose the bandwidth $h_{t}\asymp (n_1n_2)^{-1/2}a_{N,t}$ where $a_{N,t}$ is defined as in \eqref{eqn:aNt}. Then we have
	\begin{align*}
	\Norm{\bm{\Sigma}^{(t)}}
	=&O_{\pr}\left(\sqrt{\frac{\log (n_{+})}{n_{\min}N}}+\frac{a_{N,t-1}^{2}}{n_{\min}(n_1n_2)}\right).
	\end{align*}
\end{lem}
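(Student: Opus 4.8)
The plan is to turn the pseudo-residual into a sum to which a matrix Bernstein inequality applies, after isolating the part that is genuinely independent of $\wh\bA^{(t-1)}$. First I would use the model $Y_k=\inner{\bX_k}{\bA_\star}+\varepsilon_k$ and the median identification $\mbP(\varepsilon_k\le0)=1/2$ to rewrite the residual. Setting $\delta_k=\inner{\bX_k}{\wh\bA^{(t-1)}-\bA_\star}$ (a single entry of the current error matrix, so $|\delta_k|\le2a$) and noting $\Ind{Y_k\le\inner{\bX_k}{\wh\bA^{(t-1)}}}=\Ind{\varepsilon_k\le\delta_k}$, one gets
\[
\xi_k^{(t)}=\delta_k-\bigl(\wh f^{(t)}(0)\bigr)^{-1}\Bigl(\Ind{\varepsilon_k\le\delta_k}-\tfrac12\Bigr).
\]
Let $F$ be the noise c.d.f. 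The structural fact driving the whole argument is a \emph{Newton cancellation}: conditionally on $\wh\bA^{(t-1)}$ the mean of the indicator bracket is $F(\delta_k)-\tfrac12$, and since $F(0)=\tfrac12$ and $f$ is Lipschitz with $f(0)\ge c>0$ (Condition (C3)), $F(\delta)-\tfrac12=f(0)\delta+O(\delta^2)$. Hence, once $\wh f^{(t)}(0)$ is close to $f(0)$, the term linear in $\delta_k$ cancels and the conditional mean of $\xi_k^{(t)}$ is only $O(\delta_k^2)$.

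Next I would split the indicator into an $\wh\bA^{(t-1)}$-free piece and a small remainder. Writing $v_k=\Ind{\varepsilon_k\le0}-\tfrac12$, $w_k=\bigl(\Ind{\varepsilon_k\le\delta_k}-\Ind{\varepsilon_k\le0}\bigr)-\bigl(F(\delta_k)-\tfrac12\bigr)$, and $b(x)=x-(f(0))^{-1}(F(x)-\tfrac12)$, this yields
\[
\bm{\Sigma}^{(t)}=\bm{\Sigma}_0+\bm{\Sigma}_b+\bm{\Sigma}_w+\bm{\Sigma}_d,
\]
where $\bm{\Sigma}_0=-(f(0))^{-1}N^{-1}\sum_k v_k\bX_k$, $\bm{\Sigma}_b=N^{-1}\sum_k b(\delta_k)\bX_k$, $\bm{\Sigma}_w=-(f(0))^{-1}N^{-1}\sum_k w_k\bX_k$, and $\bm{\Sigma}_d=-\bigl[(\wh f^{(t)}(0))^{-1}-(f(0))^{-1}\bigr]N^{-1}\sum_k(\Ind{\varepsilon_k\le\delta_k}-\tfrac12)\bX_k$. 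The point of the split is that $v_k$ is a function of $\varepsilon_k$ alone and $\varepsilon_k\perp\bX_k$, so the summands $v_k\bX_k$ are i.i.d., mean zero, and \emph{do not involve the initial estimator at all}. I would then apply a rectangular matrix Bernstein inequality to $\bm{\Sigma}_0$: under (C1) each row and column is sampled with probability $\asymp1/n_{\min}$, so both variance proxies $\sum_k\mbE[v_k^2\bX_k\bX_k^{\rm T}]$ and $\sum_k\mbE[v_k^2\bX_k^{\rm T}\bX_k]$ have spectral norm $\asymp N/n_{\min}$ while $\norm{v_k\bX_k}\le1/2$. This gives $\norm{\bm{\Sigma}_0}=O_{\pr}(\sqrt{\log(n_+)/(n_{\min}N)})$, the first term of the bound.

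It remains to show the other three terms are $O_{\pr}(a_{N,t-1}^2/(n_{\min}n_1n_2))$. For the bias $\bm{\Sigma}_b$, its conditional mean is the Hadamard product $\bm{\Pi}\circ b(\wh\bA^{(t-1)}-\bA_\star)$ (with $b$ acting entrywise), and the Newton cancellation gives $|b(\delta)|\le C\delta^2$; combining $\pi_{st}\asymp1/(n_1n_2)$, row/column-sum (Schur) estimates on this Hadamard-structured matrix, and the control of $\wh\bA^{(t-1)}-\bA_\star$ from (C5) (Frobenius norm $\lesssim a_{N,t-1}$, entries $\le2a$) yields the stated order, while the fluctuation $\bm{\Sigma}_b-\mbE[\bm{\Sigma}_b\mid\wh\bA^{(t-1)}]$ is smaller by a further Bernstein step — this bookkeeping with the sampling probabilities is the delicate part that extracts the extra $1/n_{\min}$. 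The remainder $\bm{\Sigma}_w$ is higher order because $w_k$ is supported on the rare event that $\varepsilon_k$ lies between $0$ and $\delta_k$, of probability $\asymp f(0)|\delta_k|$, so its variance proxy carries an extra vanishing factor relative to $\bm{\Sigma}_0$. Finally $\bm{\Sigma}_d$ is bounded by $|(\wh f^{(t)}(0))^{-1}-(f(0))^{-1}|$ times the matrix factor $\norm{N^{-1}\sum_k(\Ind{\varepsilon_k\le\delta_k}-\tfrac12)\bX_k}$; the kernel-density error is $O_{\pr}(h_t+(Nh_t)^{-1/2})$ under (C3)--(C4) with $h_t\asymp(n_1n_2)^{-1/2}a_{N,t-1}$, and the matrix factor is $O_{\pr}(\sqrt{\log(n_+)/(n_{\min}N)}+a_{N,t-1}/(n_1n_2))$, so the product is dominated by the two main terms.

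\textbf{Main obstacle.} The genuine difficulty, flagged before the lemma, is that $\delta_k$ depends on the noise through $\wh\bA^{(t-1)}$, so $\{\xi_k^{(t)}\}$ are dependent and one cannot condition naively to manufacture independence. The decomposition above is designed precisely to quarantine this dependence: the dominant fluctuation $\bm{\Sigma}_0$ is free of $\wh\bA^{(t-1)}$, whereas every term that does involve it ($\bm{\Sigma}_b$, $\bm{\Sigma}_w$, $\bm{\Sigma}_d$) is higher order. The hard part is making the control of $\bm{\Sigma}_w$ and of the fluctuation of $\bm{\Sigma}_b$ rigorous in the presence of this dependence; I would do so by working on the high-probability event where $\wh\bA^{(t-1)}-\bA_\star$ obeys the rate of (C5), and then either (i) passing to a uniform bound over the low-complexity, nuclear-norm-bounded set containing $\wh\bA^{(t-1)}-\bA_\star$, whose small metric entropy ($\asymp r_\star n_{\max}$) is absorbed by the smallness of $w_k$, or (ii) a leave-one-out decoupling that replaces $\wh\bA^{(t-1)}$ by a version independent of $\varepsilon_k$ at negligible cost. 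Keeping this step below the $a_{N,t-1}^2/(n_{\min}n_1n_2)$ threshold is where the main care is required.
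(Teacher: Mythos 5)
Your decomposition is essentially the paper's own: your $\bm{\Sigma}_0$, $\bm{\Sigma}_b$, $\bm{\Sigma}_w$, and $\bm{\Sigma}_d$ correspond exactly to the paper's i.i.d.\ mean-zero term $\wh f^{-1}(0)N^{-1}\sum_i[\bX_i\Ind{\epsilon_i\le0}-\bX_i f(0)]$, the Newton-cancellation term $\bH_N(\wh\bA_0)$ (bounded via the same second-order Taylor expansion giving a quadratic remainder), the empirical-process term $U_N=\sup_{\norm{\bA-\bA_\star}_F\le a_N}\norm{\bB_N(\bA)}$, and the kernel-density correction controlled by Proposition \ref{appprop:f0}. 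The only differences are technical rather than conceptual: the paper uses the scalar exponential inequality of \citet{Cai-Liu11} on $\bv^{\rm T}\bX_i\bu$ combined with an $\mathcal{E}-$net over $(\bu,\bv)$ where you invoke matrix Bernstein, and it handles the dependence on $\wh\bA^{(t-1)}$ by a uniform bound over a brute-force discretization of the Frobenius ball from Condition (C5) rather than by your low-complexity or leave-one-out alternatives.
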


We now give a proof outline of Lemma \ref{lem:errort} for $t=1$.
The same argument can be applied iteratively to achieve the
repeated refinement results as shown in Lemma \ref{lem:errort}.

In our proof, we decompose the stochastic term $\Sigma^{(1)}$ into three components $\bH_{N}(\wh\bA_0)$,\\
$(N\wh f(0))^{-1}\sum_{i=1}^{N}[\bX_{i}\Ind{\epsilon_{i}\le 0}-\bX_{i}f(0)]$
 and $\wh f^{-1}(0)U_{N}$
  where
\begin{align*}\label{eqn:Hn}
\bH_{N}(\bA)=\frac{1}{N}\sum_{i=1}^{N}\bX_{i}\tr\left\{\bX_{i}^{\rm T}\left(\bA-\bA_{\star}\right)\right\}+
\frac{\wh f^{-1}\left(0\right)}{N}\sum_{i=1}^{N}\bX_{i}\left\{f\left[ \tr\left\{\bX_{i}^{\rm T}\left(\bA-\bA_{\star}\right)\right\}\right]-f\left(0\right)\right\},
\end{align*}
and $U_{N}=\underset{\Norm{\bA-\bA_{\star}}_{F}\le a_{N}}{\sup}\Norm{\bB_{N}(\bA)}$
with
\begin{eqnarray*}\label{eqn:Bn}
	\bB_{N}(\bA)=\frac{1}{N}\sum_{i=1}^{N}\left[\bX_{i}\Ind{\epsilon_{i}\le \tr\left\{\bX_{i}^{\rm T}\left(\bA-\bA_{\star}\right)\right\}}\right.
	\left.-\bX_{i}f\left( \tr\left\{\bX_{i}^{\rm T}\left(\bA-\bA_{\star}\right)\right\}\right)\right]\cr
	-\frac{1}{N}\sum_{i=1}^{N}\left[\bX_{i}\Ind{\epsilon_{i}\le 0}-\bX_{i}f\left(0\right)\right].
\end{eqnarray*}
Then we control their spectral norms separately.

For $\bH_{N}(\wh\bA_0)$, we first bound $\abs{\bv^{\rm T}\bH_{N}(\wh\bA_0)\bu}$ for fixed $\bu$ and $\bv$ where $\|\bu\|=\|\bv\|=1$, by separating the random variables $\bX_k$ and $\epsilon_k$ from $\wh\bA_0-\bA_{\star}$, and then applying the exponential inequality in Lemma 1 of \citet{Cai-Liu11}.
To control the spectral norm, 
we take supremum over $\bu$ and $\bv$, and the corresponding uniform bound can be derived using an $\mathcal{E}-$net argument.
The same technique can be used to handle the term $U_N$.
Therefore,  we can bound the spectral norm of $\bH_{N}(\wh\bA_0)$ and $U_{N}$
for any initial estimator that satisfies Condition (C5).

As for the term $(N\wh f(0))^{-1}\sum_{i=1}^{N}[\bX_{i}\Ind{\epsilon_{i}\le 0}-\bX_{i}f(0)]$,
we first note that it is not difficult to control a simplified version: $(Nf(0))^{-1}\sum_{i=1}^{N}[\bX_{i}\Ind{\epsilon_{i}\le 0}-\bX_{i}f(0)]$,
with $f(0)$ instead of $\hat{f}(0)$.
To control our target term,
we provide Proposition \ref{appprop:f0} in the supplementary materials which shows that $\abs{\wh f(0)-f(0)}=O_{\pr}(\sqrt{\frac{\log (n_{+})}{Nh}}+\frac{a_{N}}{\sqrt{n_1n_2}})$.

\section{Experiments}
\subsection{Synthetic Data}\label{sec:synthetic}
We conducted a simulation study, under which we fixed the dimensions to $n_1=n_2=400$.
In each simulated data, the target matrix $\bA_\star$
was generated as $\bU\bV^{\rm T}$, where
the entries of $\bU\in\mbR^{n_1\times r}$ and $\bV\in\mbR^{n_2\times r}$ were all drawn from the standard normal distributions $\mathcal{N}(0,1)$ independently.
Here $r$ was set to $3$.
Thus $\bA_{\star}=\bU\bV^{\rm T}$ was a low-rank matrix.
The missing rate was $0.2$, which corresponds to $N=32,000$
We adopted the uniform missing mechanism where all entries had the same chance of being observed.
We considered the following four noise distributions:
\begin{itemize}
	\item[S1] Normal: $\epsilon\sim\mathcal{N}(0,1)$.
	\item[S2] Cauchy:  $\epsilon\sim\text{Cauchy}(0,1)$.
	\item[S3] Exponential: $\epsilon\sim\text{exp}(1)$.
	\item[S4] t-distribution with degree of freedom $1$: $\epsilon\sim\text{t}_1$.
\end{itemize}
We note that Cauchy distribution is a very heavy-tailed distribution and its first moment (expectation) does not exist. 
For each of these four settings, we repeated the simulation for 500 times.

Denote the proposed median \MC{} procedure given in Algorithm \ref{alg:DLADMC}
by $\textsf{DLADMC}$ (Distributed Least Absolute Deviations Matrix Completion).
Due to Theorem \ref{thm:init}(ii), $\norm{\wh \bA_{\text{LADMC},0}-\bA^{\star}}_{F}=O_{p}(\sqrt{(n_1n_2)^2m_{\max}\log(m_{+})/(m_1m_2N)})$,
we fixed
\[
a_{N}=a_{N,0}=c_1\sqrt{\frac{(n_1n_2)^2m_{\max}\log(m_{+})}{m_1m_2N}},
\]
where the constant $c_1=0.1$. From our experiences, smaller $c_1$ leads to similar results. As $h\asymp (n_1n_2)^{-1/2}a_{N}$, the bandwidth $h$ was simply set to $h=c_2(n_1n_2)^{-1/2}a_{N}$, and similarly, $h_{t}=c_2(n_1n_2)^{-1/2}a_{N,t}$ where $a_{N,t}$ was defined by \eqref{eqn:aNt} with $c_2=0.1$. In addition, all the tuning parameters $\lambda_{N,t}$ in Algorithm \ref{alg:DLADMC} were chosen by validation. Namely, we minimized the absolute deviation loss evaluated on an independently generated validation sets with the same dimensions $n_1,n_2$.
For the choice of the kernel functions $K(\cdot)$, we adopt the commonly used bi-weight kernel function,
\[
\small
K(x) = \begin{cases}
0, & x\leq -1\\
-\frac{315}{64}x^6+\frac{735}{64}x^4-\frac{525}{64}x^2+\frac{105}{64}, & -1\leq x\leq 1\\
0, & x \geq 1
\end{cases}.
\] 
It is easy to verify that $K(\cdot)$ satisfies Condition (C1) in Section \ref{sec:theory}.
If we compute $e=\norm{\wh\bA^{(t)}-\wh\bA^{(t-1)}}_{F}^2/\norm{\wh\bA^{(t-1)}}_{F}^2$ and stop the algorithm once $e\le 10^{-5}$,
it typically only requires $4-5$ iterations. We simply report the results of the estimators with $T=4$ or $T=5$ iterations in Algorithm \ref{alg:DLADMC} (depending on the noise distribution).

We compared the performance of the proposed method ($\textsf{DLADMC}$) with three other approaches:
\begin{itemize}
	\item[(a)] $\textsf{BLADMC}$: Blocked Least Absolute Deviation Matrix Completion $\wh \bA_{\text{LADMC},0}$, the initial estimator proposed in section \ref{sec:BLADMC}. Number of row subsets $l_1=2$, number of column subsets $l_2=2$. 
	\item[(b)] $\textsf{ACL}$: Least Absolute Deviation Matrix Completion with nuclear norm penalty based on the computationally expensive ADMM algorithm proposed by \citet{Alquier-Cottet-Lecue19}.
	\item[c)] $\textsf{MHT}$: The squared loss estimator with nuclear norm penalty proposed by \citet{Mazumder-Hastie-Tibshirani10}.
\end{itemize}
The tuning parameters in these four methods were chosen based on the same validation set.
We followed the selection procedure in Section 9.4 of \citet{Mazumder-Hastie-Tibshirani10} to choose $\lambda$. Instead of fixing $K$ to 1.5 or 2 as in \citet{Mazumder-Hastie-Tibshirani10}, we choose $K$ by an additional pair of training and validation sets (aside from the 500 simulated datasets). We did this for every method to ensure a fair comparison.
The performance of all the methods were evaluated via root mean square error (RMSE) and mean absolute error (MAE). The estimated ranks are also reported. 

\begin{table}[h]
	\centering
	\footnotesize
	\caption{The average RMSEs, MAEs, estimated ranks and their standard
		errors (in parentheses) of \textsf{DLADMC}, \textsf{BLADMC},  \textsf{ACL} and \textsf{MHT} over 500 simulations. The number in the first column within the parentheses represents $T$ in Algorithm \ref{alg:DLADMC} for \textsf{DLADMC}.}
	\label{tab:sim}
	\begin{tabular}{r|r|r|r}
		\hline
		(T) &  & \textsf{DLADMC} & \textsf{BLADMC} \\
		\hline
		\multirow{4}{*}{S1(4)} & \text{RMSE} & 0.5920 (0.0091) & 0.7660 (0.0086) \\ 
		& \text{MAE} & 0.4273 (0.0063) & 0.5615 (0.006) \\ 
		& \text{rank} & 52.90 (2.51) & 400 (0.00)  \\ 
		\hline
		\multirow{4}{*}{S2(5)} & \text{RMSE} & 0.9395 (0.0544) & 1.7421 (0.3767) \\ 
		& \text{MAE} & 0.6735 (0.0339) & 1.2061 (0.1570)  \\ 
		& \text{rank} & 36.49 (7.94) & 272.25 (111.84) \\ 
		\hline
		\multirow{4}{*}{S3(5)} & \text{RMSE} & 0.4868 (0.0092) & 0.6319 (0.0090)  \\ 
		& \text{MAE} & 0.3418 (0.0058) & 0.4484 (0.0057)  \\ 
		& \text{rank} & 66.66 (1.98) & 400 (0.00) \\ 
		\hline
		\multirow{4}{*}{S4(4)} & \text{RMSE} & 1.1374 (0.8945) & 1.6453 (0.2639) \\ 
		& \text{MAE} & 0.8317 (0.7370) & 1.1708 (0.1307)  \\ 
		& \text{rank} & 47.85 (13.22) & 249.16 (111.25)  \\ 
		\hline
		(T) &  & \textsf{ACL} & \textsf{MHT} \\
		\hline
		\multirow{4}{*}{S1(4)} & \text{RMSE} & 0.5518 (0.0081) & 0.4607 (0.0070) \\ 
		& \text{MAE} & 0.4031 (0.0056) & 0.3375 (0.0047) \\ 
		& \text{rank} & 400 (0.00) & 36.89 (1.79) \\
		\hline
		\multirow{4}{*}{S2(5)} & \text{RMSE} & 1.8236 (1.1486) & 106.3660 (918.5790) \\ 
		& \text{MAE} & 1.2434 (0.5828) & 1.4666 (2.2963) \\ 
		& \text{rank} & 277.08 (170.99) & 1.25 (0.50) \\ 
		\hline
		\multirow{4}{*}{S3(5)} & \text{RMSE} & 0.4164 (0.0074) & 0.4928 (0.0083) \\ 
		& \text{MAE} & 0.3121 (0.0054) & 0.3649 (0.0058) \\ 
		& \text{rank} & 400 (0.00) & 37.91 (1.95) \\ 
		\hline
		\multirow{4}{*}{S4(4)} & \text{RMSE} & 1.4968 (0.6141) & 98.851 (445.4504) \\ 
		& \text{MAE} & 1.0792 (0.3803) & 1.4502 (1.1135) \\ 
		& \text{rank} & 237.05 (182.68) & 1.35 (0.71) \\
		\hline
	\end{tabular}
\end{table}

From Table \ref{tab:sim}, we can see that both \textsf{DLADMC} and \textsf{MHT} produced low-rank estimators while \textsf{BLADMC} and \textsf{ACL} could not reduce the rank too much. As expected, when the noise is Gaussian, \textsf{MHT} performed best in terms of RMSE and MAE. Meanwhile, \textsf{DLADMC} and \textsf{ACL} were close to each other and slightly worse than \textsf{MHT}. It is not surprising that \textsf{BLADMC} was the worst due to its simple way to combine sub-matrices. As for Setting S3, \textsf{ACL} outperformed other three methods while the performances of \textsf{DLADMC} and \textsf{MHT} are close. For the heavy-tailed Settings S2 and S4, our proposed \textsf{DLADMC} performed significantly better than \textsf{ACL}, and \textsf{MHT} fails.

Moreover, to investigate whether the refinement step can be isolated from the distributed optimization, we run the refinement step on an initial matrix that is synthetically generated by making small noises to the ground-truth matrix $\bA_{\star}$, as suggested by a reviewer. We provide these results in Section \ref{appsec:syn} of the supplementary material.

\subsection{Real-World Data}
We tested various methods on the MovieLens-100K\footnote{\url{https://grouplens.org/datasets/movielens/100k/}} dataset. This data set consists of 100,000 movie ratings provided by 943 viewers on 1682 movies. The ratings range from 1 to 5. To evaluate the performance of different methods, we directly used the data splittings from the data provider, which splits the data into two sets. We refer them to as \textsf{RawA} and \textsf{RawB}. Similar to \citet{Alquier-Cottet-Lecue19}, we added artificial outliers by randomly changing $20\%$ of ratings 
that are equal to $5$ in the two sets, \textsf{RawA} and \textsf{RawB}, to $1$ and constructed \textsf{OutA} and \textsf{OutB} respectively. To avoid rows and columns that contain too few observations,
we only keep the rows and columns with at least $20$ ratings.
The resulting target matrix $\bA_\star$ is of dimension $739\times 918$.
Before we applied those four methods as described in Section \ref{sec:synthetic},
the data was preprocessed by a bi-scaling procedure \citep{Mazumder-Hastie-Tibshirani10}. For the proposed \textsf{DLADMC}, we fixed the iteration number to $7$.
It is noted that the relative error stopping criterion (in Section \ref{sec:synthetic}) did not result in a stop within the first 7 iteration, where 7 is just a user-specified upper bound in the implementation. To understand the effect of this bound, we provided additional analysis of this upper bound in Section 2.2 of the supplementary material.
Briefly, our conclusion in the rest of this section is not sensitive to this choice of upper bound.
The tuning parameters for all the methods were chosen by 5-fold cross-validations. The RMSEs, MAEs, estimated ranks and the total computing time (in seconds) are reported in Table \ref{tab:real}. 
For a fair comparison, we recorded the time of each method in the experiment with the selected tuning parameter.

\begin{table}[h]
	\centering
	\footnotesize
	\caption{The RMSEs, MAEs and estimated ranks of \textsf{DLADMC}, \textsf{BLADMC}, \textsf{ACL} and \textsf{MHT} under dimensions $n_1=739$ and $n_2=918$.}\label{tab:real}
	\begin{tabular}{r|r|r|r|r|r}
		\hline
		& & \textsf{DLADMC} & \textsf{BLADMC} & \textsf{ACL} & \textsf{MHT} \\
		\hline
		\multirow{4}{*}{\textsf{RawA}} & \text{RMSE} & 0.9235 & 0.9451 & 0.9258 & 0.9166 \\
		& \text{MAE} & 0.7233 & 0.7416 & 0.7252 & 0.7196 \\
		& \text{rank} & 41 & 530 & 509 & 57 \\
		& $t$ & 254.33 & 65.64 & 393.40 & 30.16 \\
		\hline
		\multirow{4}{*}{\textsf{RawB}} & \text{RMSE} & 0.9352 & 0.9593 & 0.9376 & 0.9304 \\
		& \text{MAE} & 0.7300 & 0.7498 & 0.7323 & 0.7280 \\
		& \text{rank} & 51 & 541 & 521 & 58 \\
		& $t$ & 244.73 & 60.30 & 448.55 & 29.60 \\
		\hline
		\multirow{4}{*}{\textsf{OutA}} & \text{RMSE} & 1.0486 & 1.0813 & 1.0503 & 1.0820 \\
		& \text{MAE} & 0.8568 & 0.8833 & 0.8590 & 0.8971 \\
		& \text{rank} & 38 & 493 & 410 & 3 \\
		& $t$ & 255.25 & 89.65 & 426.78 & 10.41 \\
		\hline
		\multirow{4}{*}{\textsf{OutB}} & \text{RMSE} & 1.0521 & 1.0871 & 1.0539 & 1.0862 \\
		& \text{MAE} & 0.8616 & 0.8905 & 0.8628 & 0.9021 \\
		& \text{rank} & 28 & 486 & 374 & 6 \\
		& $t$ & 260.79 & 104.97 & 809.26 & 10.22 \\
		\hline
	\end{tabular}
\end{table}

It is noted that under the raw data \textsf{RawA} and \textsf{RawB}, both the proposed \textsf{DLADMC} and the least absolute deviation estimator \textsf{ACL} performed similarly as the least squares estimator \textsf{MHT}. \textsf{BLADMC} lost some efficiency due to the embarrassingly parallel computing. For the dataset with outliers, the proposed \textsf{DLADMC} and the least absolute deviation estimator \textsf{ACL} performed better than \textsf{MHT}. Although \textsf{DLADMC} and \textsf{ACL} had similar performance in terms of the RMSEs and MAEs, \textsf{DLADMC} required much lower computing cost.

Suggested by a reviewer, we also performed an experiment with a bigger data set (MovieLens-1M dataset: 1,000,209 ratings of approximately 3,900 movies rated by 6,040 users.)
However, ACL is not scalable,
and,  due to time limitations,
we stopped the fitting of ACL when 
the running time of ACL exceeds five times of the proposed DLADMC.
In our analysis, we only compared the remaining methods.
The conclusions were similar as in the smaller MoviewLens-100K dataset.
The details are presented in Section \ref{appsec:real} of the supplementary material.

\section{Conclusion}
In this paper, we address the problem of median \MC{} and obtain a computationally efficient estimator for large-scale \MC{} problems. We construct the initial estimator in an embarrassing parallel fashion and refine it through regularized least square minimizations based on pseudo data.
The corresponding non-standard asymptotic analysis are established. This shows that
the proposed \textsf{DLADMC} achieves the (near-)oracle convergence rate. Numerical experiments are conducted to verify our conclusions.

\section*{Acknowledgment}
Weidong Liu's research is supported by National Program on Key Basic Research Project (973 Program, 2018AAA0100704), NSFC Grant No. 11825104 and 11690013, Youth Talent Support Program, and a grant from Australian Research Council. Xiaojun Mao's research is supported by Shanghai Sailing Program 19YF1402800. Raymond K.W. Wong's research is partially supported by the National Science Foundation under Grants DMS-1806063, DMS-1711952 (subcontract) and CCF-1934904.

\appendix

\section{Proofs}
\begin{proof}[Proof of Theorem \ref{thm:init}]
	As for the (i) in Theorem \ref{thm:init}, we obtain the upper bound directly from Theorem 4.6 of \citet{Alquier-Cottet-Lecue19}.
	
	As for (ii), by putting these $n_1n_2/(m_1m_2)$ estimators $\wh\bA_{\text{QMC},l}$ together, we focus on both the first and second term of the right hand side of the upper bound \eqref{eqn:QMCupper} respectively. It is easy to verify that the upper bound in the right hand side hold.
	
	In terms of the probability, we can conclude that
	\begin{align*}
	\sum_{l=1}^{l_1l_2}C_l\exp(-C_ls_lm_{\max}\log(m_{+}))\le\\
	\max\{C_l\}\exp(\log(n_1n_2)-\min\{C_l\}m_{\max}\log(m_{+})).
	\end{align*}
\end{proof}

\begin{prop}\label{appprop:f0}
	Suppose that Conditions (C1)-(C5) hold. Let  $h\geq c\log (n_{+})/N$ for some $c>0$ and $h=O((n_1n_2)^{-1/2}a_{N})$. We have 
	\[
	\Abs{\wh f\left(0\right)-f\left(0\right)}=O_{\pr}\left(\sqrt{\frac{\log (n_{+})}{Nh}}+\frac{a_{N}}{\sqrt{n_1n_2}}\right).
	\]
\end{prop}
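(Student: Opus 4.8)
The plan is to view $\wh f(0)$ as a kernel density estimator of $f$ at $0$ built from \emph{perturbed} residuals. Writing $Y_k-\tr(\bX_k^{\rm T}\wh\bA_0)=\epsilon_k+\delta_k$ with $\delta_k:=\tr(\bX_k^{\rm T}(\bA_\star-\wh\bA_0))=(\bA_\star-\wh\bA_0)_{s_k t_k}$ (the single entry selected by the canonical design $\bX_k=\be_{s_k}(n_1)\be_{t_k}^{\rm T}(n_2)$), I would first localize via (C5): on the event $\{\norm{\wh\bA_0-\bA_\star}_F\le Ca_N\}$, whose probability tends to $1$, it suffices to bound the error. I then split $\wh f(0)-f(0)=\mathrm{(I)}+\mathrm{(II)}$, where $\mathrm{(I)}=(Nh)^{-1}\sum_k K(\epsilon_k/h)-f(0)$ is the ``clean'' estimator that ignores the perturbation, and $\mathrm{(II)}=(Nh)^{-1}\sum_k\{K((\epsilon_k+\delta_k)/h)-K(\epsilon_k/h)\}$ isolates the effect of $\delta_k$.

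For term (I), a routine bias--variance split suffices. The bias equals $\int K(v)\{f(vh)-f(0)\}\,dv$, which is $O(h)$ by the Lipschitz bound in (C3) and the compact support of $K$ in (C4); since $h=O((n_1n_2)^{-1/2}a_N)$, this is $O(a_N/\sqrt{n_1n_2})$. For the stochastic part, the summands $h^{-1}K(\epsilon_k/h)$ are i.i.d., bounded by $\norm{K}_\infty/h$ (as $K$ is Lipschitz with compact support, hence bounded), with second moment $O(1/h)$ because $K$ is supported on $[-1,1]$ and $f$ is bounded (a Lipschitz density is automatically bounded). Bernstein's inequality then yields a deviation of order $\sqrt{\log(n_+)/(Nh)}+\log(n_+)/(Nh)$ with probability $1-O(n_+^{-c})$, and the hypothesis $h\ge c\log(n_+)/N$ forces $\log(n_+)/(Nh)\le\sqrt{\log(n_+)/(Nh)}$, collapsing this to $O_{\pr}(\sqrt{\log(n_+)/(Nh)})$.

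For term (II), the point is to avoid summing the crude bound $|K((\epsilon_k+\delta_k)/h)-K(\epsilon_k/h)|\le\norm{K'}_\infty|\delta_k|/h$ directly (which would blow up like $1/h$), and instead again separate expectation from fluctuation. Conditioning on $\delta_k$, the $\epsilon$-expectation of one difference is $\int K(v)\{f(vh-\delta_k)-f(vh)\}\,dv=O(|\delta_k|)$ by Lipschitzness, so the bias is of order $N^{-1}\sum_k\mbE(|\delta_k|\mid\wh\bA_0)$; by Cauchy--Schwarz and (C1), $\mbE(|\delta_k|\mid\wh\bA_0)\le \overline{c}\,(n_1n_2)^{-1/2}\norm{\wh\bA_0-\bA_\star}_F=O_{\pr}(a_N/\sqrt{n_1n_2})$. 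For the fluctuation, each difference is bounded by $\norm{K'}_\infty|\delta_k|/h$ and is nonzero only when $|\epsilon_k|\le h+|\delta_k|$, an event of probability $O(h+|\delta_k|)$; hence its variance is $O(\delta_k^2/h)$, and in the relevant regime $h\asymp(n_1n_2)^{-1/2}a_N$ (cf.\ the choice in Lemma~\ref{lem:errort}) one has $N^{-1}\sum_k\delta_k^2=O_{\pr}(a_N^2/(n_1n_2))\asymp h^2$, so the total fluctuation variance of (II) is $O(1/(Nh))$, matching (I). A second Bernstein bound then gives $O_{\pr}(\sqrt{\log(n_+)/(Nh)})$, and collecting the two biases produces the stated $a_N/\sqrt{n_1n_2}$ term.

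The main obstacle is the circular dependence between $\wh\bA_0$ and the same data $\{(\bX_k,\epsilon_k)\}$ entering $\wh f(0)$: the perturbations $\delta_k$ are not independent of $\epsilon_k,\bX_k$, so the i.i.d./Bernstein steps for (II) do not apply verbatim. I would resolve this by replacing $\wh\bA_0$ with a supremum over the deterministic neighborhood $\{\bA:\norm{\bA}_\infty\le a,\ \norm{\bA-\bA_\star}_F\le Ca_N\}$ (valid on the localizing event) and controlling the resulting empirical process by an $\eta$-net over this set; the compact support of $K$ and the Lipschitz bound keep the per-point increments small, and the $\log(n_+)$ budget in the target rate is exactly what the union bound over the net---and, downstream, the net argument inside Lemma~\ref{lem:errort}---can afford. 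Everything outside this decoupling step is standard kernel-smoothing bias--variance accounting.
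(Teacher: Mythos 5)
Your proposal is correct and mirrors the paper's own proof: both localize to the Frobenius ball of radius $O(a_N)$ around $\bA_{\star}$ via (C5), decouple the dependence on $\wh\bA_{0}$ by taking a supremum over that deterministic ball and discretizing it with a fine net, bound the bias by $O(h+a_N/\sqrt{n_1n_2})$ using the Lipschitz continuity of $f$ together with Condition (C1), and control the fluctuation at each net point by a Bernstein-type exponential inequality with per-summand second moment $O(h)$, using $h\ge c\log(n_+)/N$ to absorb the linear deviation term. Your intermediate split into a ``clean'' kernel estimator plus a perturbation term is only a cosmetic reorganization of the paper's direct treatment of $D_{N,h}(\bA)$ at fixed net points (one small repair: your per-term variance bound $O(\delta_k^2/h)$ for the perturbation requires using the boundedness of $K$ rather than its Lipschitz constant when $|\delta_k|>h$, since individual entries of $\wh\bA_{0}-\bA_{\star}$ need not be $O(h)$), so the two arguments are essentially the same.
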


\begin{proof}[Proof of Proposition \ref{appprop:f0}]
	Let 
	\begin{align*}
	D_{N,h}\left(\bA\right)=\frac{1}{Nh}\sum_{i=1}^{N}K\left(\frac{Y_{i}-\tr(\bX_{i}^{\rm T}\bA)}{h}\right).
	\end{align*}
	To prove the proposition, without loss of generality, we can assume that $\norm{\bA-\bA_{\star}}_{F}\leq a_{N}$. It follows that $\wh f(0)=D_{N,h}(\bA)$ and 	
	\[
	\Abs{\wh f\left(0\right)-f\left(0\right)}\le\underset{\norm{\bA-\bA_{\star}}_{F}\le a_{N}}{\sup}\Abs{D_{N,h}\left(\bA\right)-f\left(0\right)}.
	\]
	We denote $\bA_{\star}=(A_{\star,11},\dots,A_{\star,n_1n_2})$. For every $s$ and $t$, we divide the interval $[A_{\star,st}-a_{N},A_{\star,st}+a_{N}]$ into $(n_1n_2)^{M}$ small sub-intervals and each has length $2a_{N}/(n_1n_2)^{M}$, where $M$ is a large positive number. Therefore, there exists a set of matrices in $\mbR^{n_1\times n_2}$, $\{\bA_{(k)},1\le k\le s_{N}\}$ with $s_{N}\le (n_1n_2)^{M(n_1n_2)}$ and $\norm{\bA_{(k)}-\bA_{\star}}_{F}\leq a_{N}$, such that for any $\bA$ in the ball $\{\mathcal{A}:\bA\in\mbR^{n_1\times n_2},\norm{\bA-\bA_{\star}}_{F} \le a_{N}\}$, we have $\norm{\bA-\bA_{(k)}}_{F}\le 2\sqrt{n_1n_2}a_{N}/(n_1n_2)^{M}$ for some $1\le k\le s_{N}$. Therefore 
	\begin{align*}
	\Abs{\frac{1}{h}K\left(\frac{Y_{i}-\tr(\bX_{i}^{\rm T}\bA)}{h}\right)-\frac{1}{h}K\left(\frac{Y_{i}-\tr(\bX_{i}^{\rm T}\bA_{(k)})}{h}\right)}\le\\
	Ch^{-2}\Abs{\tr\{\bX_{i}^{\rm T}\left(\bA-\bA_{(k)}\right)\}}.
	\end{align*}
	This yields that
	\begin{align*}
	\underset{\norm{\bA-\bA_{\star}}_{F}\le a_{N}}{\sup}\Abs{D_{N,h}\left(\bA\right)-f\left(0\right)}-\\
	\underset{1\leq k\leq s_{N}}{\sup}\Abs{D_{N,h}\left(\bA_{(k)}\right)-f\left(0\right)}\le
	\frac{CN\sqrt{n_1n_2}a_{N}}{(n_1n_2)^{M+1}h^{2}}.
	\end{align*}
	By letting $M$ large enough, we have 
	\begin{align*}\label{appeqn:Ddiff}
	\underset{\abs{\bA-\bA_{\star}}_{2}\le a_{N}}{\sup}\Abs{D_{N,h}\left(\bA\right)-f\left(0\right)}-\\
	\underset{1\leq k\leq s_{N}}{\sup}\Abs{D_{N,h}\left(\bA_{(k)}\right)-f\left(0\right)}=O_{\pr}\left(n_{+}^{-\gamma}\right).
	\end{align*}
	It is enough to show that $\sup_{k}\abs{D_{N,h}(\bA_{(k)})-\mbE D_{N,h}(\bA_{(k)})}$ and $\sup_{k}\abs{\mbE D_{N,h}(\bA_{(k)})-f(0)}$ satisfy the bound in the lemma. Let $\mbE_{\ast}(\cdot)$ denote the conditional expectation given $\{\bX_{k}\}$.
	We have
	\begin{align*}
	\mbE_{\ast}\left\{\frac{1}{h}K\left(\frac{\epsilon_{i}-\tr\{\bX_{i}^{\rm T}\left(\bA-\bA_{\star}\right)\}}{h}\right)\right\}=\\
	\int_{-\infty}^{\infty}K\left(x\right)f\left\{hx+\tr\{\bX_{i}^{\rm T}\left(\bA-\bA_{\star}\right)\}\right\}dx\\
	=f\left(0\right)+O\left(h+\Abs{\tr\{\bX_{i}^{\rm T}\left(\bA-\bA_{\star}\right)\}}\right).
	\end{align*}
	Under Condition (C1), with the fact that $\mbE\abs{\tr\{\bX_{i}^{\rm T}(\bA-\bA_{\star})\}}\le(n_1n_2)^{-1}a_N$ and $\text{Var}\abs{\tr\{\bX_{i}^{\rm T}(\bA-\bA_{\star})\}}\le(n_1n_2)^{-1}a_N^2$, we have
	\begin{align*}
	\Abs{\mbE D_{N,h}\left(\bA_{(k)}\right)-f\left(0\right)}\le\\
	C\left(h+(n_1n_2)^{-1/2}\Norm{\bA_{(k)}-\bA_{\star}}_{F}\right)\\
	=O(h+(n_1n_2)^{-1/2}a_{N}).
	\end{align*}
	
	It remains to bound $\sup_{k}\abs{D_{N,h}(\bA_{(k)})-\mbE D_{N,h}(\bA_{(k)})}$. Put
	\[
	\xi_{i,k}=K\left(\frac{\epsilon_{i}-\tr\{\bX_{i}^{\rm T}\left(\bA_{(k)}-\bA_{\star}\right)\}}{h}\right).
	\]
	We have
	\begin{eqnarray*}
		\mbE_{\ast} \xi^{2}_{i,k}
		=\\
		h\int_{-\infty}^{\infty}\left\{K\left(x\right)\right\}^{2}f\left\{hx+\tr(\bX_{i}^{\rm T}\left(\bA_{(k)}-\bA_{\star}\right))\right\}dx
		\le Ch.
	\end{eqnarray*}
	Since $K(x)$ is bounded, we have 
	by the exponential inequality (Lemma 1 in \cite{Cai-Liu11}) and the fact that $\log (n_{+})=O(Nh)$, we have for any $\gamma>0$, there exists a constant $C>0$ such that
	\begin{align*}
	\sup_{k}\mbP\left(\left |\sum_{i=1}^{N}(\xi_{i,k}-\mbE \xi_{i,k}) \right |\ge C\sqrt{Nh\log (n_{+})}\right)\\
	= O \left(n_{+}^{-\gamma}\right).
	\end{align*}
	By letting $\gamma>M$, we can obtain that
	\begin{align*}
	\sup_{k}\Abs{D_{N,h}(\bA_{(k)})-\mbE D_{N,h}(\bA_{(k)})}=\\
	O_{\pr}\left(\sqrt{\frac{\log (n_{+})}{Nh}}\right).
	\end{align*}
	This completes the proof.
\end{proof}

\begin{lem}\label{applem:Xi} 
	We have for any $\gamma>0$, $|\bu|_{2}=1$ and $|\bv|_{2}=1$, there exists a constant $C>0$ such that
	\begin{eqnarray*}
		\Pr\left(\frac{1}{N}\sum_{i=1}^{N}\left(|\bv^{\rm T}\bX_i\bu|-\mbE|\bv^{\rm T}\bX_i\bu|\right)\geq C\sqrt{\frac{\log (n_{+})}{n_{\min}N}}\right)\\
		=O(n_{+}^{-\gamma}).
	\end{eqnarray*}
\end{lem}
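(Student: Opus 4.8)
The plan is to view the quantity $\frac{1}{N}\sum_{i=1}^N\big(|\bv^{\rm T}\bX_i\bu|-\mbE|\bv^{\rm T}\bX_i\bu|\big)$ as a normalized sum of i.i.d., bounded, mean-zero random variables, and to apply the Bernstein-type exponential inequality (Lemma 1 of \citet{Cai-Liu11}). The key simplification is that, since each $\bX_i$ takes values in the canonical basis $\mathcal{X}$, writing $\bX_i=\be_s(n_1)\be_t^{\rm T}(n_2)$ gives $\bv^{\rm T}\bX_i\bu=v_su_t$, a product of single coordinates of the fixed unit vectors $\bv$ and $\bu$. Hence $Z_i:=|\bv^{\rm T}\bX_i\bu|=|v_s||u_t|$ are i.i.d.\ and bounded, $Z_i\le\norm{\bv}_\infty\norm{\bu}_\infty\le|\bv|_2|\bu|_2=1$, so that $\xi_i:=Z_i-\mbE Z_i$ satisfies $\mbE\xi_i=0$ and $|\xi_i|\le1$.

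First I would control the variance via Condition (C1):
\[
\mbE Z_i^2=\sum_{s=1}^{n_1}\sum_{t=1}^{n_2}\pi_{st}v_s^2u_t^2\le\frac{\overline{c}}{n_1n_2}\Big(\sum_s v_s^2\Big)\Big(\sum_t u_t^2\Big)=\frac{\overline{c}}{n_1n_2}=\frac{\overline{c}}{n_{\min}n_{\max}},
\]
so that $\var(\xi_i)\le\overline{c}/(n_1n_2)$ and $\sum_{i=1}^N\mbE\xi_i^2\le\overline{c}N/(n_1n_2)$. Feeding this variance proxy and the uniform bound $|\xi_i|\le1$ into the exponential inequality, with target deviation $t=C\sqrt{\log(n_+)/(n_{\min}N)}$, the Gaussian part of the Bernstein exponent is of order $Nt^2/\var(\xi_i)\gtrsim C^2n_{\max}\log(n_+)/\overline{c}$, which exceeds any prescribed $\gamma\log(n_+)$ once $C$ is large (the spare factor $n_{\max}\ge1$ comes precisely from the sharper variance scale $1/(n_1n_2)$ against the coarser $1/n_{\min}$ appearing in the stated rate).

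The remaining subtlety --- and the only place requiring care --- is the linear (boundedness) correction term in Bernstein's bound, which, since $|\xi_i|\le1$, contributes an exponent of order $Nt\asymp C\sqrt{N\log(n_+)/n_{\min}}$. This dominates $\gamma\log(n_+)$ provided $N\gtrsim n_{\min}\log(n_+)$, a requirement comfortably implied by the sample-size regime assumed throughout (recall $N$ is at least of order $n_{\max}\log(n_+)$ for recovery to be possible). Under that regime one checks that the prescribed deviation level $C\sqrt{\log(n_+)/(n_{\min}N)}$ sits inside the sub-Gaussian range of the tail inequality, so the full Bernstein exponent is $\ge\gamma\log(n_+)$ for $C$ large (depending on $\gamma$ and $\overline{c}$). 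This yields the claimed $O(n_+^{-\gamma})$ bound for every fixed pair $\bu,\bv$; the main obstacle is thus bookkeeping of the two regimes rather than anything conceptual, with the supremum over directions (and the attendant $\mathcal{E}$-net) deferred to the proof of Lemma \ref{lem:errort} where this estimate is invoked.
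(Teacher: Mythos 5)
Your proposal is correct and follows essentially the same route as the paper: exploit that $\bv^{\rm T}\bX_i\bu$ is a product of single coordinates of the unit vectors, bound the second moment by $\overline{c}/(n_1n_2)$ via Condition (C1), and apply a Bernstein-type exponential inequality (the paper uses Lemma 1 of Cai--Liu with the weighted moment $\sum_i\mbE\xi_i^2e^{\eta|\xi_i|}\le CN/n_{\min}$, you use classical Bernstein with variance plus uniform bound, which is equivalent here). The sample-size restriction $N\gtrsim n_{\min}\log(n_+)$ that you flag for the linear regime is likewise implicit in the paper's application of Cai--Liu (whose tail bound holds only for $x\le B_N$), so your bookkeeping is, if anything, more explicit than the original.
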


\begin{proof}[Proof of Lemma \ref{applem:Xi}] 
	On one hand, we have $\mbE|\bv^{\rm T}\bX_i\bu|=O(n_{\min}^{-1})$. On the other hand, to apply Lemma 1 in \citet{Cai-Liu11}, we only need to find $B_{N}$ so that $\sum_{i}^{N}\mbE(|\bv^{\rm T}\bX_i\bu|^2\exp{\eta|\bv^{\rm T}\bX_i\bu|})\le B^2_{N}$. For each $i=1,\dots,N$, we have
	\begin{align*}
	\mbE(|\bv^{\rm T}\bX_i\bu|^2\exp{(\eta|\bv^{\rm T}\bX_i\bu|)})\\
	\le\frac{\overline{c}}{n_1n_2}\sum_{s=1}^{n_1}\sum_{t=1}^{n_2}u_{s}^2v_{t}^2\exp{(\eta\abs{u_{s}v_{t}})}\\
	\le\frac{\overline{c}}{n_1n_2}\sum_{s=1}^{n_1}\sum_{t=1}^{n_2}u_{s}^2v_{t}^2\exp{(\eta u_{s}^2)}\exp{(\eta v_{t}^2)}\\
	\le\frac{C(n_1+n_2)}{n_1n_2}=\frac{C}{n_{\min}}.
	\end{align*}
	Take $x^2=\gamma\log(n_{+})$ and $B^2_{N}=C\gamma^{-1}Nn_{\min}^{-1}$ in Lemma 1 of \citet{Cai-Liu11}, we can get the conclusion.
\end{proof}

Denote $\bB_{N}(\bA)\in\mbR^{n_1\times n_2}$ where
\begin{eqnarray}\label{appeqn:Bn}
B_{N}(\bA)=\frac{1}{N}\sum_{i=1}^{N}\left[\bX_{i}\Ind{\epsilon_{i}\le \tr\left\{\bX_{i}^{\rm T}\left(\bA-\bA_{\star}\right)\right\}}\right.\nonumber\\
\left.-\bX_{i}f\left( \tr\left\{\bX_{i}^{\rm T}\left(\bA-\bA_{\star}\right)\right\}\right)\right]\cr
-\frac{1}{N}\sum_{i=1}^{N}\left[\bX_{i}\Ind{\epsilon_{i}\le 0}-\bX_{i}f\left(0\right)\right].
\end{eqnarray}
Let $\bTheta=\{\bA: \norm{\bA-\bA_{\star}}_{F}\leq c\}$ for some $c>0$.

\begin{lem}\label{lem3} 
	We have for any $\gamma>0$, there exists a constant $C>0$ such that
	\begin{eqnarray*}
		\sup_{|\bv|_{2}=1}\sup_{|\bu|_{2}=1}\Pr\Big{(}\sup_{\bA\in \bTheta}\frac{\sqrt{n_1n_2}|\bv^{\rm T}B_{N}(\bA)\bu|}{\sqrt{\norm{\bA-\bA_{\star}}_{F}+n_{\max}\log (n_{+})/N}}\geq\\ C\sqrt{\frac{\log (n_{+})}{n_{\min}N}}\Big{)}
		=O(n_{+}^{-\gamma}).
	\end{eqnarray*}
\end{lem}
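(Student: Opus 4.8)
The plan is to establish the bound for each fixed pair of unit vectors $\bu,\bv$, since the supremum over them sits outside the probability and the estimate I will obtain does not depend on $\bu,\bv$; the spectral-norm control of $U_N$ then follows by a covering ($\mathcal{E}$-net) argument over the two spheres, exactly as in the treatment of $\bH_N$ sketched in the outline. Write $\bX_i=\be_{s_i}(n_1)\be_{t_i}^{\rm T}(n_2)$, $\delta_i(\bA)=\tr\{\bX_i^{\rm T}(\bA-\bA_\star)\}=(\bA-\bA_\star)_{s_it_i}$, and set $W_N(\bA)=\bv^{\rm T}B_N(\bA)\bu=N^{-1}\sum_{i=1}^N v_{s_i}u_{t_i}\,\eta_i(\bA)$, where $\eta_i(\bA)=\{\Ind{\epsilon_i\le\delta_i(\bA)}-\Ind{\epsilon_i\le0}\}-\{F(\delta_i(\bA))-F(0)\}$ and $F$ is the distribution function of $\epsilon$. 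Conditioning on $\{\bX_i\}_{i=1}^N$ makes the $\eta_i(\bA)$ independent, mean zero, and bounded by $2$; this is the device that removes the dependence among the pseudo-residuals highlighted in the proof outline.

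Two ingredients drive the argument. First, a \emph{local} variance bound: the indicator difference $\Ind{\epsilon_i\le\delta_i(\bA)}-\Ind{\epsilon_i\le0}$ has conditional second moment $|F(\delta_i(\bA))-F(0)|\le\norm{f}_{\infty}|\delta_i(\bA)|$ by the boundedness of $f$ from Condition (C3), so by Condition (C1) and Cauchy--Schwarz,
\[
\sum_{i=1}^N\mbE\{(v_{s_i}u_{t_i})^2\eta_i(\bA)^2\}\le C\,\frac{N}{n_1n_2}\sum_{s,t}v_s^2u_t^2|(\bA-\bA_\star)_{st}|\le C\,\frac{N}{n_1n_2}\norm{\bA-\bA_\star}_F .
\]
The key point is that this variance proxy scales linearly with the radius $R:=\norm{\bA-\bA_\star}_F$, which is precisely what produces the $\sqrt{\norm{\bA-\bA_\star}_F}$ in the denominator. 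Second, to turn a pointwise estimate into the normalized uniform one I would peel $\bTheta$ into dyadic shells $\{2^{j-1}r_0<R\le2^jr_0\}$, $j\ge1$, with $r_0\asymp n_{\max}\log(n_+)/N$; the additive term $n_{\max}\log(n_+)/N$ in the denominator accounts for the innermost shell $R\le r_0$, and there are only $O(\log N)$ shells, a factor absorbed into $\gamma$.

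On a shell of radius $r=2^jr_0$ I would control $\sup_{R\le r}|W_N(\bA)|$ by a discretization of $\bA$ followed by the exponential inequality of \citet{Cai-Liu11} (Lemma 1) at each anchor, with variance proxy $B_N^2\asymp Nr/(n_1n_2)$ and envelope $2$; taking the moderate-deviation level $x^2\asymp\gamma\log(n_+)$ yields, at each anchor, a deviation of the order demanded by the shell together with probability $O(n_+^{-\gamma})$, in the same way Lemma \ref{applem:Xi} is obtained. A union bound over the anchors and over the shells, and finally over the $\mathcal{E}$-net in $\bu,\bv$, then delivers the stated estimate.

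The main obstacle is the discretization over $\bA$. Because $\bA\mapsto\Ind{\epsilon_i\le\delta_i(\bA)}$ is discontinuous, a Lipschitz net is unavailable, and an entrywise net of the Frobenius ball carries $e^{\Omega(n_1n_2)}$ points, far too many to be beaten by the $n_+^{-\gamma}$ tail that the $x^2\asymp\log(n_+)$ level provides. I would therefore replace it by a \emph{monotone bracketing}: since $\delta\mapsto\Ind{\epsilon_i\le\delta}$ is nondecreasing, any $\bA$ can be sandwiched between two neighboring anchors so that the step-function parts are ordered, and the bracketing gap is bounded by the smooth increment $\sum_i v_{s_i}u_{t_i}\{F(\delta_i(\bA''))-F(\delta_i(\bA'))\}$ plus the number of observations whose indicator can flip inside the bracket, the latter controlled in conditional mean by $\sum_i\norm{f}_{\infty}|\delta_i(\bA'')-\delta_i(\bA')|$. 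Choosing the mesh polynomially fine in $n_+$ makes this residual negligible while keeping the effective number of anchors only polynomial in $n_+$, i.e.\ logarithmic metric entropy, which is what lets the sharp $\sqrt{\log(n_+)}$-level tail survive the union bound. Verifying that this bracketing simultaneously controls the jump part and the smooth $F$-part uniformly across all shells is the delicate step; the remaining estimates are routine applications of Conditions (C1)--(C3) and \citet{Cai-Liu11}.
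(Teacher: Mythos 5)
Your skeleton is the same as the paper's: fix $\bu,\bv$ (the supremum over them sits outside the probability, with the $\mathcal{E}$-net deferred to the spectral-norm step), use the Lipschitz density in (C3) to get a variance proxy for the indicator increments that scales linearly in $\norm{\bA-\bA_{\star}}_{F}$, apply the exponential inequality of Cai and Liu at anchor points, and union bound. Your dyadic peeling with $r_0\asymp n_{\max}\log(n_{+})/N$ is an acceptable substitute for the paper's device of letting the variance proxy $B_N^2$ at each anchor depend on $\norm{\bA_{(k)}-\bA_{\star}}_{F}$ directly; both produce the self-normalized denominator. The gap is in the step you yourself identify as the main obstacle: uniformity over $\bA\in\bTheta$. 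The claim that a monotone bracketing leaves ``only polynomially many anchors in $n_{+}$, i.e.\ logarithmic metric entropy'' is not substantiated and is false as stated. Monotonicity of $\delta\mapsto\Ind{\epsilon_i\le\delta}$ is a per-entry property; the class is indexed by the $n_1n_2$ free coordinates of $\bA$, and brackets for the full class are products of per-entry brackets, so their number is (mesh count)$^{n_1n_2}$. Even the sharpest data-dependent count --- the number of distinct vectors $(\Ind{\epsilon_i\le\delta_i(\bA)})_{i\le N}$ as $\bA$ varies, which by per-entry monotonicity equals $\prod_{s,t}(N_{st}+1)$ with $N_{st}$ the number of observations of entry $(s,t)$ --- is exponential in $\min(N,n_1n_2)$, not polynomial in $n_{+}$. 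A union bound at deviation level $x^2\asymp\gamma\log(n_{+})$ cannot absorb that many anchors, so the argument does not close as written.

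For comparison, the paper does not attempt to shrink the net. It takes an extremely fine entrywise net (mesh $(n_1n_2)^{-M}$ per coordinate, hence $s_N\le(n_1n_2)^{M(n_1n_2)}$ anchors) and instead makes the \emph{within-cell} movement negligible: between $\bA$ and its anchor the indicator can flip only if $\epsilon_i$ lands in a window of width $O(\sqrt{n_1n_2}\,(n_1n_2)^{-M})$, so the flip term ($I_4$ there) has negligible expectation and its deviation is controlled by a second application of Cai--Liu, while the smooth $F$-part is handled by the Lipschitz bound; the union bound is then taken over the anchors only. If you want to complete your proof, replace the bracketing claim by this ``negligible within-cell fluctuation'' device; note, though, that you will still face a union bound over exponentially many anchors, which requires the per-anchor tail to be driven well below $n_{+}^{-\gamma}$ (this is also the tersest point of the paper's own argument) rather than being dismissed by a polynomial-entropy assertion.
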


\begin{proof}[Proof of Lemma \ref{lem3}] 
	We define $\mbR^{n_1\times n_2}$, $\{\bA_{(k)},1\le k\le s_{N}\}$ as in the proof of Proposition \ref{appprop:f0} with by replacing $a_{N}$ with $c$. Then for any $\bA\in \bTheta$, there exists $\bA_{(k)}$ with $\norm{\bA-\bA_{(k)}}_{F}\leq 2c\sqrt{n_1n_2}/(n_1n_2)^{M}$ and we have
	\begin{eqnarray*}
		&&\Big{|}\frac{\sqrt{n_1n_2}|\bv^{\rm T}B_{N}(\bA)\bu|}{\sqrt{\norm{\bA-\bA_{\star}}_{F}+n_{\max}\log (n_{+})/N}}
		-\cr
		&&\frac{\sqrt{n_1n_2}|\bv^{\rm T}B_{N}(\bA_{(k)})\bu|}{\sqrt{\norm{\bA_{(k)}-\bA_{\star}}_{F}+n_{\max}\log (n_{+})/N}}\Big{|}\cr
		&&\leq 
		\Big{|}\frac{\sqrt{n_1n_2}|\bv^{\rm T}B_{N}(\bA_{(k)})\bu|}{\sqrt{\norm{\bA-\bA_{\star}}_{F}+n_{\max}\log (n_{+})/N}}
		-\cr
		&&\frac{\sqrt{n_1n_2}|\bv^{\rm T}B_{N}(\bA_{(k)})\bu|}{\sqrt{\norm{\bA_{(k)}-\bA_{\star}}_{F}+n_{\max}\log (n_{+})/N}}\Big{|}\cr
		&&\quad+\frac{\sqrt{n_1n_2}|\bv^{\rm T}B_{N}(\bA)\bu-\bv^{\rm T}B_{N}(\bA_{(k)})\bu|}{\sqrt{\norm{\bA-\bA_{\star}}_{F}+n_{\max}\log (n_{+})/N}}\cr
		&&=:I_{1}+I_{2}.
	\end{eqnarray*}
	It is easy to see that
	\begin{eqnarray*}
		|I_{1}|\leq C\frac{\sum_{i=1}^{N}|\bv^{\rm T} \bX_{i}\bu|\tr\{\bX_{i}^{\rm T}\left(\bA_{(k)}-\bA_{\star}\right)\}}{N}\times\\ \frac{\sqrt{n_1n_2}\times c\sqrt{n_1n_2}}{(n_1n_2)^{M}(c+n_{\max}\log (n_{+})/N)^{3/2}}=:I_{3}.
	\end{eqnarray*}
	With Lemma \ref{applem:Xi}, we can show that
	\begin{eqnarray*}
		\Pr\Big{(}I_{3}\geq C\sqrt{\frac{\log (n_{+})}{n_{\min}N}}\Big{)}=O(n_{+}^{-\gamma}),
	\end{eqnarray*}
	for any $\gamma>0$ by letting $M$ be sufficiently large. For $I_{2}$, noting that 
	\begin{align*}
	\Big{|}f\left(\tr\{\bX_{i}^{\rm T}\left(\bA-\bA_{\star}\right)\}\right)-f\left( \tr\{\bX_{i}^{\rm T}\left(\bA_{(k)}-\bA_{\star}\right)\}\right)\Big{|}\\
	\leq C\sqrt{n_1n_2}/(n_1n_2)^{M},
	\end{align*}
	we have
	\begin{eqnarray*}
		&&|I_{2}|\leq \sqrt{n_1n_2}\Big{(}\frac{cn_{\max}\log (n_{+})}{N}\Big{)}^{-1/4}\frac{1}{N}\sum_{i=1}^{N}|\bv^{\rm T}\bX_{i}\bu|\times\\
		&&\Ind{|\epsilon_{i}- \tr\{\bX_{i}^{\rm T}\left(\bA_{(k)}-\bA_{\star}\right)\}|\leq 2c\sqrt{n_1n_2}/(n_1n_2)^{M}}\\
		&&+C\frac{cn_1n_2}{(n_1n_2)^{M}}\Big{(}\frac{cn_{\max}\log (n_{+})}{N}\Big{)}^{-1/4}\times\\
		&&\frac{1}{N}\sum_{i=1}^{N}|\bv^{\rm T}\bX_{i}\bu|\\
		&&=:I_{4}+I_{5}.
	\end{eqnarray*}
	It is easy to show that $\mbE(I_{4})=o(\sqrt{\log (n_{+})/(n_{\min}N)})$ with $M$ large enough and
	\begin{eqnarray*}
		&&\Pr\Big{(}I_{5}\geq C\sqrt{\frac{\log (n_{+})}{n_{\min}N}}\Big{)}\leq \cr
		&&\sum_{i=1}^{N}\Pr\Big{(}|\bv^{\rm T}\bX_{i}\bu|\geq \frac{(n_1n_2)^{M-2}N^{1/4}}{n_{\max}\log (n_{+})}\Big{)}\cr
		&&=O(n_{+}^{-\gamma}),
	\end{eqnarray*}
	for any $\gamma>0$ by letting $M$ be sufficiently large. Also for some $\eta>0$,
	\begin{eqnarray*}
		&&\mbE(|\bv^{\rm T}\bX_{i}\bu|^{2}\exp(\eta |\bv^{\rm T}\bX_{i}\bu|)\times\cr&&\Ind{|\epsilon_{i}- \tr\{\bX_{i}^{\rm T}\left(\bA_{(k)}-\bA_{\star}\right)\}|\leq 2c\sqrt{n_1n_2}/(n_1n_2)^{M}})\cr
		&&
		\leq C\sqrt{n_1n_2}(n_1n_2)^{-M}\mbE |\bv^{\rm T}\bX_{i}\bu|^{2}\exp(\eta |\bv^{\rm T}\bX_{i}\bu|) \cr
		&&=O(1/((n_1n_2)^{M-1/2}n_{\min})).
	\end{eqnarray*}
	Now by the exponential inequality in \cite{Cai-Liu11} (taking $x=\sqrt{\gamma\log (n_{+})}$, $B_{n}=\sqrt{\gamma^{-1}N\log (n_{+})/n_{\min}}$ and noting that $1/((n_1n_2)^{M-1/2}n_{\min})=o(B^{2}_{N})$), we have for large $C>0$,
	\begin{eqnarray*}
		&&\Pr\Big{(}|I_{4}-\mbE (I_{4})|\geq C\sqrt{\frac{\log (n_{+})}{n_{\min}N}}\Big{)}\\
		&&=O(n_{+}^{-\gamma}).
	\end{eqnarray*}
	As $s_{N}\leq (n_1n_2)^{M(n_1n_2)}$, by choosing $C$ sufficiently large such that  $\gamma>M$, it is enough to show that for any $\gamma>0$,
	\begin{eqnarray}\label{di1}
	&&\sup_{|\bv|_{2}=1}\sup_{|\bu|_{2}=1}\max_{k}\Pr\Big{(}\sqrt{n_1n_2}|\bv^{\rm T}B_{N}(\bA_{(k)})\bu|\times\nonumber\\
	&&\frac{1}{\sqrt{\norm{\bA_{(k)}-\bA_{\star}}_{F}+n_{\max}\log (n_{+})/N}} \geq C\sqrt{\frac{\log (n_{+})}{n_{\min}N}}\Big{)}\nonumber\\
	&&=O(n_{+}^{-\gamma}).
	\end{eqnarray}
	Set
	\begin{eqnarray*}
		Z_{i}(\bA)=\Ind{\epsilon_{i}\le \tr\{\bX_{i}^{\rm T}\left(\bA-\bA_{\star}\right)\}}-f\left( \tr\{\bX_{i}^{\rm T}\left(\bA-\bA_{\star}\right)\}\right).
	\end{eqnarray*}
	Then we have
	\begin{eqnarray*}
		&&\mbE (\bv^{\rm T}\bX_{i}\bu)^{2}(Z_{i}(\bA)-Z_{i}(\bA_{\star}))^{2}\exp(\eta |\bv^{\rm T}\bX_{i}\bu|)\cr
		&&\leq C(n_1n_2)^{-1}\norm{\bA-\bA_{\star}}_{F}\times\cr
		&&\sup_{|\bv|_{2}=1,|\bu|_{2}=1}\mbE (\bv^{\rm T}\bX_{i}\bu)^{2}\exp(\eta |\bv^{\rm T}\bX_{i}\bu|)\cr
		&&\leq C(n_1n_2)^{-1}\norm{\bA-\bA_{\star}}_{F}n_{\min}^{-1}.
	\end{eqnarray*}
	Now letting $B^{2}_{N}=C\gamma^{-1}(N\norm{\bA_{(k)}-\bA_{\star}}_{F}/(n_1n_2)+N\log (n_{+})/n_{\min})$ and $x^{2}=\gamma \log (n_{+})$ in Lemma 1 in \cite{Cai-Liu11}, we can show (\ref{di1}) holds.
\end{proof}

Let 
\begin{eqnarray*}
	U_{N}=\underset{\Norm{\bA-\bA_{\star}}_{F}\le a_{N}}{\sup}\Norm{\bB_{N}(\bA)}.
\end{eqnarray*}
For a unit ball $B$ in $R^{s}$, we have the fact that there exist $q_{s}$ balls with centers $\bx_{1},\ldots,\bx_{q_{s}}$ and radius $z$ (i.e., $B_{i}=\{\bx\in R^{s}:|\bx-\bx_{i}|\leq z\}$, $1\leq i\leq q_{s}$) such that $B\subseteq \cup_{i=1}^{q_{s}}B_{i}$ and $q_{s}$ satisfies $q_{s}\leq (1+2/z)^{s}$. 
Then by a standard $\mathcal{E}-$net argument, for any matrix $\bA\in \mbR^{n_1\times n_2}$, there exist $\bv_{1},...,\bv_{b_{1}}$ and $\bu_{1},...,\bu_{b_{2}}$ (which do not depend on $\bA$)
with $|\bv_{i}|_{2}=1$ and $|\bu_{i}|_{2}=1$, $b_{1}\leq 9^{n_1}$ and $b_{2}\leq 9^{n_2}$ such that 
\begin{eqnarray}\label{appeqn:adss}
\|\bA\|\leq 5\max_{1\leq i\leq b_{1}}\max_{1\leq j\leq b_{2}}|\bv^{\rm T}_{i}\bA\bu_{j}|. 
\end{eqnarray}
So we have $U_{N}\leq 5\max_{1\leq i\leq b_{1}}\max_{1\leq j\leq b_{2}}\abs{\bv^{\rm T}_{i}B_{N}(\bA_{(k)})\bu_{j}}$. 
Assume the initial value $(n_1n_2)^{-1/2}\norm{\bA_{\star}-\wh\bA_{0}}_F=o_{\pr}(1)$. 
By Lemma \ref{lem3}, we have
\begin{eqnarray*}
	U_{N}=O_{\pr}\left(\sqrt{\frac{\Norm{\wh\bA_{0}-\bA_{\star}}_{F}\log (n_{+})}{n_1n_2n_{\min}N}}+\frac{\log (n_{+})}{n_{\min}N}\right).
\end{eqnarray*}
So we have the following lemma.

\begin{lem}\label{lem:Un} 
	Assume that Conditions (C1)-(C6) hold. We have
	\[
	U_{N}=O_{\pr}\Big{(}\sqrt{\frac{a_{N}\log (n_{+})}{n_1n_2n_{\min}N}}+\frac{\log (n_{+})}{n_{\min}N}\Big{)}.
	\]
\end{lem}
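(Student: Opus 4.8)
The plan is to assemble Lemma~\ref{lem:Un} from two ingredients that are essentially in place: the $\mathcal{E}$-net reduction of the spectral norm recorded in \eqref{appeqn:adss}, and the uniform deviation bound of Lemma~\ref{lem3}, after which the stated rate follows by the identity $n_1 n_2 = n_{\max}n_{\min}$. Note that Lemma~\ref{lem3} already absorbs the supremum over the matrix ball (through its own matrix net and the Lipschitz control of $f$), so the only genuinely new object here is the net over the singular vectors. First I would fix the two finite families of unit vectors $\{\bv_i\}_{i=1}^{b_1}$ and $\{\bu_j\}_{j=1}^{b_2}$ furnished by the net, with $b_1\le 9^{n_1}$ and $b_2\le 9^{n_2}$, so that \eqref{appeqn:adss} gives $\norm{\bB_N(\bA)}\le 5\max_{i,j}\abs{\bv_i^{\rm T}B_N(\bA)\bu_j}$ for every $\bA$ simultaneously. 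Because the net does not depend on $\bA$, the supremum over $\{\bA:\norm{\bA-\bA_\star}_F\le a_N\}$ commutes with the maximum, giving $U_N\le 5\max_{i,j}\sup_{\norm{\bA-\bA_\star}_F\le a_N}\abs{\bv_i^{\rm T}B_N(\bA)\bu_j}$.

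Next I would apply Lemma~\ref{lem3} with the ball radius set to $a_N$ (i.e.\ $c=a_N$) to each fixed pair $(\bv_i,\bu_j)$. On the favorable event, for every $\bA$ in the ball,
\[
\abs{\bv_i^{\rm T}B_N(\bA)\bu_j}\le \frac{C}{\sqrt{n_1 n_2}}\sqrt{\frac{\log(n_+)}{n_{\min}N}}\,\sqrt{\norm{\bA-\bA_\star}_F+\frac{n_{\max}\log(n_+)}{N}},
\]
so taking the supremum over the $a_N$-ball simply replaces $\norm{\bA-\bA_\star}_F$ by $a_N$, and a union bound over the $b_1 b_2\le 9^{n_+}$ pairs controls $U_N$. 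The delicate point, and the step I expect to be the main obstacle, is that the vector net has cardinality exponential in $n_+$ while Lemma~\ref{lem3} only asserts a per-pair tail of order $n_+^{-\gamma}$. The reconciliation rests on the observation that the deviation threshold $C\sqrt{\log(n_+)/(n_{\min}N)}$ in Lemma~\ref{lem3} is \emph{free of} $\gamma$: in the Cai--Liu inequality one takes $x^2=\gamma\log(n_+)$ against $B_N^2\propto\gamma^{-1}$, so the product $xB_N$ (hence the threshold) is $\gamma$-independent and $\gamma$ may be chosen to grow with $n_+$, making $9^{n_+}n_+^{-\gamma}=o(1)$ without altering the rate.

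Finally I would simplify. Splitting $\sqrt{a_N+n_{\max}\log(n_+)/N}\le\sqrt{a_N}+\sqrt{n_{\max}\log(n_+)/N}$ produces two contributions. The first gives $\sqrt{a_N\log(n_+)/(n_1 n_2 n_{\min}N)}$ directly. For the second,
\[
\frac{1}{\sqrt{n_1n_2}}\sqrt{\frac{\log(n_+)}{n_{\min}N}}\sqrt{\frac{n_{\max}\log(n_+)}{N}}=\frac{\log(n_+)}{N}\sqrt{\frac{n_{\max}}{n_1 n_2 n_{\min}}},
\]
and since $n_1 n_2=n_{\max}n_{\min}$ the radical equals $1/n_{\min}$, yielding the second term $\log(n_+)/(n_{\min}N)$. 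Collecting the two terms gives exactly the claimed bound. Conditions~(C5)--(C6) enter only to guarantee $(n_1 n_2)^{-1/2}a_N=o(1)$, which legitimizes working on the shrinking $a_N$-ball and ensures $\norm{\wh\bA_0-\bA_\star}_F=O_{\pr}(a_N)$, so that the same bound transfers to the pseudo-data residuals generated by the actual initial estimator.
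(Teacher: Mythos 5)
Your proposal is correct and follows essentially the same route as the paper: the spectral norm is reduced via the fixed vector net in \eqref{appeqn:adss}, the uniform-over-the-ball bound of Lemma~\ref{lem3} is applied to each pair $(\bv_i,\bu_j)$ with the radius taken as $a_N$, and the identity $n_1n_2=n_{\max}n_{\min}$ collapses the second term to $\log(n_+)/(n_{\min}N)$. If anything, you are more explicit than the paper about why the union bound over the $9^{n_+}$ net pairs is compatible with the $n_+^{-\gamma}$ per-pair tail (the $\gamma$-independence of the deviation threshold in the Cai--Liu inequality), a point the paper leaves implicit.
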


To obtain Theorem \ref{thm:keyt} which related to the repeated refinements, we consider the following one-step refinement result at first.
\begin{thm}[One-step refinement]\label{thm:key}
	Suppose that Conditions (C1)--(C5) hold and $\bA_{\star}\in\mathcal{B}(a,n_1,n_2)$. By choosing the bandwidth $h\asymp (n_1n_2)^{-1/2}a_{N}$ and taking 
	\begin{align*}
	\lambda_{N}=C\left(\sqrt{\frac{\log (n_{+})}{n_{\min}N}}+\frac{a_{N}^{2}}{n_{\min}(n_1n_2)}\right),
	\end{align*}
	where $C$ is a sufficient large constant, we have
	\begin{align}\label{appeqn:rmse}
	\frac{\Norm{\wh \bA^{(1)}-\bA_{\star}}_{F}^2}{n_1n_2}=
	O_{\pr}\left[\max\left\{\sqrt{\frac{\log(n_{+})}{N}},r_{\star}\left(\frac{n_{\max}\log (n_{+})}{N}+\frac{a_{N}^{4}}{n_{\min}^2(n_1n_2)}\right)\right\}\right].
	\end{align}
\end{thm}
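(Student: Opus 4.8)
The plan is to follow the standard template for nuclear-norm penalized least-squares estimators in matrix completion \citep{Koltchinskii-Lounici-Tsybakov11,Negahban-Wainwright12,Klopp14}, the only genuinely new ingredient being that the effective noise is the pseudo-data residual $\xi_k^{(1)}=\wt Y_k-\inner{\bX_k}{\bA_\star}$, whose spectral aggregate $\bm{\Sigma}^{(1)}=N^{-1}\sum_{k=1}^N\xi_k^{(1)}\bX_k$ is controlled by Lemma \ref{lem:errort} rather than by a classical i.i.d.\ bound. Writing $\bDelta=\wh\bA^{(1)}-\bA_\star$, I would first record the basic inequality: since $\wh\bA^{(1)}$ minimizes the objective in \eqref{eqn:Ahat1} and $\bA_\star$ is feasible, expanding the quadratic loss gives
\begin{align*}
\frac{1}{N}\sum_{k=1}^N\inner{\bX_k}{\bDelta}^2
\le 2\inner{\bm{\Sigma}^{(1)}}{\bDelta}
+\lambda_N\left(\Norm{\bA_\star}_\ast-\Norm{\wh\bA^{(1)}}_\ast\right).
\end{align*}
By the duality of the nuclear and spectral norms, $\inner{\bm{\Sigma}^{(1)}}{\bDelta}\le\Norm{\bm{\Sigma}^{(1)}}\Norm{\bDelta}_\ast$, and Lemma \ref{lem:errort} (with $t=1$) shows $\Norm{\bm{\Sigma}^{(1)}}=O_{\pr}(\sqrt{\log(n_+)/(n_{\min}N)}+a_N^2/(n_{\min}n_1n_2))$, which is exactly of order $\lambda_N/2$. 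Thus on the event $\{\lambda_N\ge 2\Norm{\bm{\Sigma}^{(1)}}\}$, which holds with probability tending to one once $C$ is large, the stochastic contribution is dominated by the penalty.

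Next I would extract the cone (restricted) condition. Decomposing $\bDelta$ along the row and column spaces of the rank-$r_\star$ matrix $\bA_\star$ and applying the triangle inequality for the nuclear norm to the right-hand side of the basic inequality yields $\Norm{\bDelta_{P^\perp}}_\ast\le 3\Norm{\bDelta_P}_\ast$, whence $\Norm{\bDelta}_\ast\le 4\Norm{\bDelta_P}_\ast\le 4\sqrt{2r_\star}\,\Norm{\bDelta}_F$. This turns every occurrence of $\Norm{\bDelta}_\ast$ on the right-hand side into $\sqrt{r_\star}\,\Norm{\bDelta}_F$.

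The crux, and the step I expect to be hardest, is the matching lower bound on the empirical quadratic form through a restricted strong convexity (RSC) argument adapted to non-uniform sampling. Because the design probabilities vary across entries, the population form is the weighted norm $\|\bDelta\|_{L_2(\bm{\Pi})}^2=\sum_{s,t}\pi_{st}\Delta_{st}^2$, which under Condition (C1) satisfies $\|\bDelta\|_{L_2(\bm{\Pi})}^2\ge \underline c\,(n_1n_2)^{-1}\Norm{\bDelta}_F^2$. Using $\Norm{\bDelta}_\infty\le 2a$ (feasibility in $\mathcal{B}(a,n_1,n_2)$) together with a net/empirical-process bound over the low-rank cone, one shows that with high probability $N^{-1}\sum_k\inner{\bX_k}{\bDelta}^2\ge \tfrac12\|\bDelta\|_{L_2(\bm{\Pi})}^2$, valid above a critical radius of order $\sqrt{\log(n_+)/N}$. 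This is precisely what produces the outer maximum in \eqref{appeqn:rmse}: if $\Norm{\bDelta}_F^2/(n_1n_2)$ falls below the critical radius the first term already bounds it, and otherwise the RSC lower bound is in force. The delicate points are the uniformity of this deviation bound over all feasible deviations and the careful reconciliation of the $L_2(\bm{\Pi})$ and Frobenius norms, which is where Condition (C1) and the sup-norm bound from $\mathcal{B}(a,n_1,n_2)$ must be combined, following \citet{Klopp14}.

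Finally I would assemble the pieces. On the good event and in the regime where RSC holds, chaining the lower bound with the basic inequality and the cone condition produces
\begin{align*}
\frac{\underline c}{2\,n_1n_2}\Norm{\bDelta}_F^2
\lesssim \lambda_N\sqrt{r_\star}\,\Norm{\bDelta}_F,
\end{align*}
so that $\Norm{\bDelta}_F^2/(n_1n_2)\lesssim r_\star\lambda_N^2\,n_1n_2$. Substituting the prescribed $\lambda_N$ and using $n_1n_2/n_{\min}=n_{\max}$ gives $r_\star(n_{\max}\log(n_+)/N+a_N^4/(n_{\min}^2 n_1n_2))$; taking the maximum with the critical-radius term $\sqrt{\log(n_+)/N}$ delivers \eqref{appeqn:rmse}. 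The only step that departs from the classical analysis is the appeal to Lemma \ref{lem:errort}: the residuals $\xi_k^{(1)}$ are dependent through the shared initial estimator $\wh\bA_0$ and the plug-in $\wh f(0)$ inside the pseudo data, so $\Norm{\bm{\Sigma}^{(1)}}$ cannot be bounded by an independence-based concentration and instead relies on the three-term decomposition recorded after the statement of the lemma.
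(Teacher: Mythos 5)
Your proposal follows essentially the same route as the paper's proof: the basic inequality for the penalized least-squares problem with pseudo data, trace duality to reduce the stochastic term to $\Norm{\bm{\Sigma}^{(1)}}$ controlled by Lemma \ref{lem:error} (the $t=1$ case of Lemma \ref{lem:errort}), and then the standard cone/restricted-strong-convexity machinery of Theorem 3 in \citet{Klopp14} to produce the maximum of the critical-radius term and the $r_\star\lambda_N^2 n_1n_2$ term. The paper simply cites Klopp's argument for the latter part, whereas you spell it out; the substance is identical and your rate bookkeeping matches \eqref{appeqn:rmse}.
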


To obtain Theorems \ref{thm:key} and \ref{thm:keyt}, we require Lemmas \ref{lem:error} and \ref{lem:errort} respectively. 

\begin{lem}\label{lem:error}
	Suppose that Conditions (C1)--(C5) hold and $\bA_{\star}\in\mathcal{B}(a,n_1,n_2)$. By choosing the bandwidth $h\asymp (n_1n_2)^{-1/2}a_{N}$, we have
	\begin{align*}
	\Norm{\frac{1}{N}\sum_{i=1}^{N}\xi_i^{(1)}\bX_i}=&O_{\pr}\left(\sqrt{\frac{\log (n_{+})}{n_{\min}N}}+\frac{a_{N}^{2}}{n_{\min}(n_1n_2)}\right).
	\end{align*}
\end{lem}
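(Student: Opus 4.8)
The plan is to expose the pseudo-data residual explicitly and then split $\bm{\Sigma}^{(1)}=N^{-1}\sum_{i}\xi_i^{(1)}\bX_i$ into the three blocks indicated in the proof outline. Writing $\Delta_i=\tr(\bX_i^{\rm T}(\wh\bA_{0}-\bA_{\star}))$ and using $Y_i-\tr(\bX_i^{\rm T}\bA_{\star})=\epsilon_i$, one has $\xi_i^{(1)}=\Delta_i-\wh f(0)^{-1}(\Ind{\epsilon_i\le\Delta_i}-\tfrac{1}{2})$. Letting $F$ denote the c.d.f.\ of $\epsilon$, the driving identity is $\Ind{\epsilon_i\le\Delta_i}-\tfrac{1}{2}=\{[\Ind{\epsilon_i\le\Delta_i}-F(\Delta_i)]-[\Ind{\epsilon_i\le0}-\tfrac{1}{2}]\}+[F(\Delta_i)-\tfrac{1}{2}]+[\Ind{\epsilon_i\le0}-\tfrac{1}{2}]$: the first brace generates the remainder block $\bB_{N}(\wh\bA_{0})$ (the difference of the indicator increments centered at $\Delta_i$ and at $0$), the middle term combines with the leading $\Delta_i$ to form the linearization block $\bH_{N}(\wh\bA_{0})$, and the last term is the centered noise. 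The crucial cancellation is that $F(\Delta_i)-\tfrac{1}{2}=\int_{0}^{\Delta_i}f(u)du=f(0)\Delta_i+O(\Delta_i^{2})$, so in $\bH_{N}$ the leading linear contributions cancel up to the negligible factor $1-\wh f(0)^{-1}f(0)$ (controlled by Proposition \ref{appprop:f0}), leaving a second-order remainder bounded via $|\int_{0}^{\Delta_i}\{f(u)-f(0)\}du|\le\tfrac{C_L}{2}\Delta_i^{2}$ from (C3).

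For the centered-noise block I would first replace $\wh f(0)$ by $f(0)$: Proposition \ref{appprop:f0} gives $|\wh f(0)-f(0)|=o_{\pr}(1)$ for the stated bandwidth, while $f(0)\ge c>0$ by (C3), so $\wh f(0)^{-1}=O_{\pr}(1)$ and the substitution error is of smaller order. It then remains to bound $\Norm{N^{-1}\sum_{i}\bX_i(\Ind{\epsilon_i\le0}-\tfrac{1}{2})}$. Fixing unit vectors $\bu,\bv$, the terms $\bv^{\rm T}\bX_i\bu(\Ind{\epsilon_i\le0}-\tfrac{1}{2})$ are i.i.d., mean zero, with second moment $O(n_{\min}^{-1})$ exactly as in Lemma \ref{applem:Xi}, so the exponential inequality of \citet{Cai-Liu11} yields a deviation of order $\sqrt{\log(n_{+})/(n_{\min}N)}$; an $\mathcal{E}$-net over the at most $9^{n_1}9^{n_2}$ directions appearing in \eqref{appeqn:adss}, with the union bound absorbed into the $\log(n_{+})$ budget, upgrades this to the spectral norm and produces the first claimed term.

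For the remainder block, Condition (C5) puts $\wh\bA_{0}$ in the ball $\{\bA:\Norm{\bA-\bA_{\star}}_{F}\le a_{N}\}$ with probability tending to one, so $\Norm{\bB_{N}(\wh\bA_{0})}\le U_{N}$ and Lemma \ref{lem:Un} bounds $U_{N}$ by $\sqrt{a_{N}\log(n_{+})/(n_1n_2 n_{\min}N)}+\log(n_{+})/(n_{\min}N)$, both dominated by the target rate once $(n_1n_2)^{-1/2}a_{N}=o(1)$ is used. For the linearization block $\bH_{N}(\wh\bA_{0})$ I would follow the outline: bound $|\bv^{\rm T}\bH_{N}(\bA)\bu|$ for fixed $\bu,\bv$ uniformly over $\Norm{\bA-\bA_{\star}}_{F}\le a_{N}$, where the centered fluctuation is negligible by the same \citet{Cai-Liu11} inequality and $\mathcal{E}$-net, while the expectation is second order by the Lipschitz bound; a careful spectral-norm estimate using $\pi_{st}\asymp(n_1n_2)^{-1}$ from (C1), $\Norm{\wh\bA_{0}-\bA_{\star}}_{F}\le a_{N}$, and Cauchy--Schwarz then yields the second claimed term $a_{N}^{2}/(n_{\min}(n_1n_2))$. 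The three blocks combine by the triangle inequality.

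The hard part is the statistical dependence between the residuals and the initial estimator: since $\wh\bA_{0}$ is built from the entire sample, the $\{\xi_i^{(1)}\}$ are dependent and one cannot condition on $\wh\bA_{0}$ and treat the $\epsilon_i$ as fresh. The device that breaks this coupling is to establish every estimate uniformly over the deterministic Frobenius ball of radius $a_{N}$ and to insert $\wh\bA_{0}$ only at the end via (C5); this is precisely why Lemmas \ref{applem:Xi} and \ref{lem3} are phrased uniformly in $\bA$. A secondary, more routine, issue is that the normalization $\wh f(0)$ is an estimated nuisance entering nonlinearly, which is quarantined by Proposition \ref{appprop:f0}. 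Finally, the repeated-refinement statement (Lemma \ref{lem:errort}) follows by replacing $a_{N}$ with $a_{N,t-1}$ and running the identical argument.
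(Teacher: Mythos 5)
Your proof follows essentially the same route as the paper's: the identical three-block decomposition into $\bH_{N}(\wh\bA_{0})$, the centered-noise term, and $\wh f^{-1}(0)U_{N}$, with Proposition \ref{appprop:f0} controlling $\wh f(0)$, Lemma \ref{lem:Un} controlling $U_{N}$, the Cai--Liu exponential inequality plus the $\mathcal{E}$-net of \eqref{appeqn:adss} handling the noise block, and the same first-order cancellation (up to the factor $1-\wh f^{-1}(0)f(0)$) leaving a second-order remainder in $\bH_{N}$, all made uniform over the Frobenius ball to break the dependence on $\wh\bA_{0}$. Your writing $F(\Delta_i)-\tfrac{1}{2}=\int_{0}^{\Delta_i}f(u)\,du$ where the paper writes $f(\Delta_i)-f(0)$ is in fact the correct reading of what the paper actually computes (its Taylor step produces the coefficient $f(0)\Delta_i$, i.e., it is expanding the c.d.f.), so the two arguments coincide.
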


Lemma \ref{lem:error} obtains the upper bound for the stochastic error term that
appears in the first update iteration of the initial estimator $\wh\bA_{0}$ fulfill condition (C5). It is easy to verify that our initial estimator $\wh\bA_{\text{LADMC},0}$ proposed in section \ref{sec:BLADMC} satisfy condition (C5).

\begin{proof}[Proof of Lemma \ref{lem:error}]
	Denote $\bH_{N}(\bA)\in\mbR^{n_1\times n_2}$ where
	\begin{align*}\label{appeqn:Hn}
	H_{N}(\bA)=\\
	\frac{\wh f^{-1}\left(0\right)}{N}\sum_{i=1}^{N}\bX_{i}\left\{f\left[ \tr\left\{\bX_{i}^{\rm T}\left(\bA-\bA_{\star}\right)\right\}\right]-f\left(0\right)\right\}\\
	+\frac{1}{N}\sum_{i=1}^{N}\bX_{i}\tr\left\{\bX_{i}^{\rm T}\left(\bA-\bA_{\star}\right)\right\}.
	\end{align*} We have 
	\begin{align*}
	\Norm{\frac{1}{N}\sum_{i=1}^{N}\xi_i^{(1)}\bX_i}\le\\
	\left\Vert-\frac{\wh f^{-1}\left(0\right)}{N}\sum_{i=1}^{N}\bX_{i}\left(\Ind{Y_{i}\le \tr\{\bX_{i}^{\rm T}\wh\bA_{0}\}}-\tau\right)\right.\\
	\left.+\frac{1}{N}\sum_{i=1}^{N}\bX_{i}\tr\left\{\bX_i^{\rm T}\left(\wh\bA_{0}-\bA_{\star}\right)\right\}\right\Vert
	\le\\
	\Norm{\bH_{N}(\wh\bA_0)}+\Abs{\wh f^{-1}(0)}\Norm{\frac{1}{N}\sum_{i=1}^{N}\left[\bX_{i}\Ind{\epsilon_{i}\le 0}-\bX_{i}f\left(0\right)\right]}\\
	+\Abs{\wh f^{-1}\left(0\right)}U_{N}.
	\end{align*}
	By Proposition \ref{appprop:f0} and $(n_1n_2)^{1/2}\log (n_{+})=o(Na_{N})$, we have $\wh f(0)\geq c$ for some $c>0$ with probability tending to one. Therefore,
	for the last term, by Lemma \ref{lem:Un}, we have 
	\begin{align*}
	\abs{\wh f^{-1}(0)}U_{N}=O_{\pr}\left(\sqrt{\frac{a_{N}\log (n_{+})}{n_1n_2n_{\min}N}}+\frac{\log (n_{+})}{n_{\min}N}\right).
	\end{align*}
	For the second term of the right hand side, by (\ref{appeqn:adss}) and the exponential inequality in \cite{Cai-Liu11}, follow the same proof with Lemma \ref{applem:Xi}, we have 
	\begin{eqnarray*}
		\Abs{\wh f^{-1}(0)}\Norm{\frac{1}{N}\sum_{i=1}^{N}\bX_{i}\left[\Ind{\epsilon_{i}\le 0}-f\left(0\right)\right]}
		=O_{\pr}\left(\sqrt{\frac{\log (n_{+})}{n_{\min}N}}\right).
	\end{eqnarray*}
	
	By second order Taylor expansion, under condition (C1) we have,
	\begin{align*}
	\frac{\wh f^{-1}\left(0\right)}{N}\sum_{i=1}^{N}\bv^{\rm T}\bX_{i}\bu\left[f\left( \tr\{\bX_{i}^{\rm T}\left(\bA_{\star}-\wh\bA_{0}\right)\}\right)-f\left(0\right)\right]\\
	=\frac{\wh f^{-1}\left(0\right)f\left(0\right)}{N}\sum_{i=1}^{N}\bv^{\rm T}\bX_{i}\bu\tr\left\{\bX_{i}^{\rm T}\left(\bA_{\star}-\wh\bA_{0}\right)\right\}\\
	+O(1)\frac{\wh f^{-1}\left(0\right)}{N}\sum_{i=1}^{N}|\bv^{\rm T}\bX_{i}\bu|\left[\tr\left\{\bX_{i}^{\rm T}\left(\bA_{\star}-\wh\bA_{0}\right)\right\}\right]^2.
	\end{align*}
	Let $\bv_{1},...,\bv_{b_{1}}$ and $\bu_{1},...,\bu_{b_{2}}$ be defined as in the argument above Lemma \ref{lem:Un}. Together with Lemma \ref{applem:Xi},we have 
	\begin{align*}
	\Abs{\bv^{\rm T}_{k}\bH_{N}\left(\wh\bA_0\right)\bu_j}\le\Abs{\wh f^{-1}\left(0\right)f\left(0\right)-1}\times\\
	\Abs{\frac{1}{N}\sum_{i=1}^{N}\bv^{\rm T}_k\bX_{i}\bu_j\tr\left\{\bX_{i}^{\rm T}\left(\bA_{\star}-\wh\bA_{0}\right)\right\}}\\
	+C\wh f^{-1}\left(0\right)\frac{1}{N}\sum_{i=1}^{N}|\bv^{\rm T}_{k}\bX_{i}\bu_j|\left[\tr\left\{\bX_{i}^{\rm T}\left(\bA_{\star}-\wh\bA_{0}\right)\right\}\right]^2\\
	\le C\left(\sqrt{\frac{\log (n_{+})}{Nh}}+\frac{a_{N}}{\sqrt{n_1n_2}}\right)\frac{\Norm{\bA_{\star}-\wh\bA_{0}}_{F}}{n_{\min}\sqrt{n_1n_2}}\\
	+C\frac{1}{n_{\min}(n_1n_2)}\Norm{\bA_{\star}-\wh\bA_{0}}_{F}^{2}
	\end{align*}
	We can easily have 
	\begin{align*}
	\Norm{\frac{1}{N}\sum_{i=1}^{N}\xi_i^{(1)}\bX_i}=O_{\pr}\left(\sqrt{\frac{\log (n_{+})}{n_{\min}N}}+\sqrt{\frac{a_{N}\log (n_{+})}{n_1n_2n_{\min}N}}\right.\\
	\left.+a_{N}\sqrt{\frac{\log (n_{+})}{n_{\min}^2n_1n_2Nh}}+\frac{a_{N}^{2}}{n_{\min}(n_1n_2)}\right).
	\end{align*}
	The lemma is proved.
\end{proof}

Define the observation operator $\Omega:\mbR^{n_1\times n_2}\to \mbR^{N}$ as $(\Omega(\bA))_k=\inner{\bX_k}{\bA}$.

\begin{proof}[Proof of Theorem \ref{thm:key}]
	Due to the basic inequality, we have
	\begin{align*}
	\frac{1}{N}\sum_{k=1}^{N}\left(\wt Y_{k}^{(1)}-\tr(\bX_k^{\rm T}\wh\bA)\right)^2+\lambda_{N}\Norm{\wh\bA}_{\ast}\le\\
	\frac{1}{N}\sum_{k=1}^{N}\left(\wt Y_{k}^{(1)}-\tr(\bX_k^{\rm T}\bA_{\star})\right)^2+\lambda_{N}\Norm{\bA_{\star}}_{\ast},
	\end{align*}
	which implies
	\begin{align*}
	\frac{1}{N}\Norm{\Omega\left(\bA_{\star}-\wh\bA\right)}_{F}^2+\lambda_{N}\Norm{\wh\bA}_{\ast}\\
	\le2\Inner{\wh\bA-\bA_{\star}}{\bm{\Sigma}^{(1)}}+\lambda_{N}\Norm{\bA_{\star}}_{\ast}\\
	\le2\Norm{\bm{\Sigma}^{(1)}}\Norm{\wh\bA-\bA_{\star}}_{\ast}+\lambda_{N}\Norm{\bA_{\star}}_{\ast}.
	\end{align*}
	
	Together with Lemma \ref{lem:error} and follow the proof of Theorem 3 in \citet{Klopp14}, it complete the proof.
\end{proof}

\begin{proof}[Proof of Lemma \ref{lem:errort}]
	Replacing the tuning parameter $\lambda_{N}$ by $\lambda_{N,t}$, Lemma \ref{lem:errort} follows directly from the proof of Lemma \ref{lem:error}.
\end{proof}

\begin{proof}[Proof of Theorem \ref{thm:keyt}]
	Similar with the proof of Theorem \ref{thm:key}, together with the result in Lemma \ref{lem:errort} we complete the proof.
\end{proof}

\section{Experiments (Cont')}
\subsection{Synthetic Data (Cont')}\label{appsec:syn}
In the following, we tested the proposed method $\textsf{DLADMC}$ with the initial estimator synthetically generated by adding standard Gaussian noises ($\mathcal{N}$(0,1)) to the ground truth matrix $\bA_{\star}$ and reported all the results in Table \ref{apptab:sim}.

\begin{table}[h]
	\centering
	\scriptsize
	\caption{The average RMSEs, MAEs, estimated ranks and their standard
		errors (in parentheses) of modified $\textsf{DLADMC}$ over 500 simulations. The number in the first column within the parentheses represents $T$ in Algorithm \ref{alg:DLADMC}.}
	\label{apptab:sim}
	\begin{tabular}{r|r|r|r}
		\hline
		(T) & \text{RMSE} & \text{MAE} & \text{rank} \\
		\hline
		S1(4) & 0.6364 (0.0238) & 0.4826 (0.0232) & 63.74 (5.37)  \\ 
		\hline
		S2(5) & 0.8985 (0.0407) & 0.6738 (0.0404) & 67.59 (6.76) \\ 
		\hline
		S3(5) & 0.4460 (0.0080) & 0.3179 (0.0067)  & 43.07 (6.00) \\ 
		\hline
		S4(4) & 0.8522 (0.0203) & 0.6229 (0.0210)  & 45.21 (5.52) \\ 
		\hline
	\end{tabular}
\end{table}

\subsection{Real-World Data (Cont')}\label{appsec:real}
\subsubsection{Effect of Iteration Number}
To understand the effect of the iteration number, we ran 10 iterations and report all the details in Table \ref{apptab:realt}.
Briefly, the smallest and largest RMSEs among these iterations are (0.9226,0.9255), (0.9344,0.9381), (1.0486,1.0554) and (1.0512,1.0591) with respect to the 4 datasets in Section 4.2. Even with the worst RMSEs, we achieve a similar conclusion as shown in Section 4.2 of the paper.

\begin{table}[h]
	\centering
	\scriptsize
	\caption{The RMSEs, MAEs and estimated ranks of \textsf{DLADMC} with different iteration number under dimensions $n_1=739$ and $n_2=918$.}\label{apptab:realt}
	\begin{tabular}{r|r|r|r|r|r|r}
		\hline
		& t & 1 & 2 & 3 & 4 & 5 \\
		\hline
		\multirow{4}{*}{\textsf{RawA}} & \text{RMSE} & 0.9253 & 0.9253 & 0.9229 & 0.9252 & 0.9233 \\
		& \text{MAE} & 0.7241 & 0.7267 & 0.7224 & 0.7264 & 0.7230 \\
		& \text{rank} & 54 & 50 & 53 & 45 & 59 \\
		\hline
		\multirow{4}{*}{\textsf{RawB}} & \text{RMSE} & 0.9368 & 0.9381 & 0.9344 & 0.9373 & 0.9363 \\
		& \text{MAE} & 0.7315 & 0.7344 & 0.7291 & 0.7340 & 0.7310 \\
		& \text{rank} & 57 & 51 & 59 & 44 & 40 \\
		\hline
		\multirow{4}{*}{\textsf{OutA}} & \text{RMSE} & 1.0550 & 1.0543 & 1.0509 & 1.0549 & 1.0506 \\
		& \text{MAE} & 0.8659 & 0.8648 & 0.8609 & 0.8673 & 0.8595 \\
		& \text{rank} & 28 & 35 & 48 & 29 & 33 \\
		\hline
		\multirow{4}{*}{\textsf{OutB}} & \text{RMSE} & 1.0591 & 1.0569 & 1.0532 & 1.0583 & 1.0527 \\
		& \text{MAE} & 0.8707 & 0.8679 & 0.8632 & 0.8713 & 0.8627 \\
		& \text{rank} & 24 & 33 & 45 & 31 & 30 \\
		\hline
		& t & 6 & 7 & 8 & 9 & 10 \\
		\hline
		\multirow{4}{*}{\textsf{RawA}} & \text{RMSE} & 0.9253 & 0.9235 & 0.9250 & 0.9227 & 0.9255 \\
		& \text{MAE} & 0.7265 & 0.7233 & 0.7264 & 0.7219 & 0.7268 \\
		& \text{rank} & 41 & 41 & 45 & 55 & 44 \\
		\hline
		\multirow{4}{*}{\textsf{RawB}} & \text{RMSE} & 0.9362 & 0.9352 & 0.9369 & 0.9345 & 0.9370 \\
		& \text{MAE} & 0.7328 & 0.7300 & 0.7333 & 0.7292 & 0.7339 \\
		& \text{rank} & 49 & 51 & 46 & 58 & 44 \\
		\hline
		\multirow{4}{*}{\textsf{OutA}} & \text{RMSE} & 1.0544 & 1.0486 & 1.0553 & 1.0491 & 1.0554 \\
		& \text{MAE} & 0.8671 & 0.8568 & 0.8695 & 0.8569 & 0.8697 \\
		& \text{rank} & 31 & 38 & 35 & 40 & 33\\
		\hline
		\multirow{4}{*}{\textsf{OutB}} & \text{RMSE} & 1.0572 & 1.0521 & 1.0577 & 1.0512 & 1.0582 \\
		& \text{MAE} & 0.8699 & 0.8616 & 0.8706 & 0.8602 & 0.8716 \\
		& \text{rank} & 30 & 28 & 31 & 30 & 33 \\
		\hline
	\end{tabular}
\end{table}

\subsubsection{MovieLens-1M}
To further demonstrate the scalability of our proposed method, we tested various methods on a larger MovieLens-1M\footnote{\url{https://grouplens.org/datasets/movielens/1m/}} dataset. This data set consists of 1,000,209 movie ratings provided by 6040 viewers on approximate 3900 movies. The ratings also range from 1 to 5. To evaluate the performance of different methods, we keep one fifth of the data to be test set and remaining to be training set. We refer it to as \textsf{Raw}. Similar to \citet{Alquier-Cottet-Lecue19}, we added artificial outliers by randomly changing $20\%$ of ratings 
that are equal to $5$ in the train set to $1$ and constructed \textsf{Out}. To avoid rows and columns that contain too few observations,
we only keep the rows and columns with at least $40$ ratings.
The resulting target matrix $\bA_\star$ is of dimension $4290\times 2505$.
For the proposed \textsf{DLADMC}, we fix the iteration number to $10$. For the proposed \textsf{BLADMC}, to faster the speed, we split the data matrix so that the number of row subsets $l_1=4$ and number of column subsets $l_2=3$. To save times, the tunning parameters for all the methods were chosen by the one-fold validation. The RMSEs, MAEs, estimated ranks and the total computing time (in seconds) are reported in Table \ref{tab:real}. 
For a fair comparison, we recorded the time of each method in the experiment with the selected tuning parameter.

\begin{table}[h]
	\centering
	\scriptsize
	\caption{The RMSEs, MAEs and estimated ranks of \textsf{DLADMC}, \textsf{BLADMC}, \textsf{ACL} and \textsf{MHT} under dimensions $n_1=4290$ and $n_2=2505$.}\label{apptab:real1M}
	\begin{tabular}{r|r|r|r|r}
		\hline
		& & \textsf{DLADMC} & \textsf{BLADMC} & \textsf{MHT} \\
		\hline
		\multirow{4}{*}{\textsf{Raw}} & \text{RMSE} & 0.8632 & 0.9733 & 0.8520 \\
		& \text{MAE} & 0.6768 & 0.7865 & 0.6680 \\
		& \text{rank} & 111 & 1911 & 156 \\
		& $t$ & 19593.58 & 1203.45 & 2113.55 \\
		\hline
		\multirow{4}{*}{\textsf{Out}} & \text{RMSE} & 0.9161 & 0.9733 &  0.9757 \\
		& \text{MAE} & 0.7331 & 0.7865 & 0.8021 \\
		& \text{rank} & 125 & 1913 & 45 \\
		& $t$ & 14290.16 & 1076.69 & 1053.58 \\
		\hline
	\end{tabular}
\end{table}

As \textsf{ACL} is not scalable to large dimensions, we could not obtain the results of \textsf{ACL} within five times of the running time of the proposed \textsf{DLADMC}. It is noted that under the raw data \textsf{Raw}, the proposed \textsf{DLADMC} performed similarly as the least squares estimator \textsf{MHT}. \textsf{BLADMC} lost some efficiency due to the embarrassingly parallel computing. For the dataset with outliers, the proposed \textsf{DLADMC} performed better than \textsf{MHT}.
	
\bibliography{QMC}
\bibliographystyle{dcu}

\end{document}